\title{Computing Full Conformal Prediction Set with Approximate Homotopy}
\author{%
  Eugene Ndiaye\\
  RIKEN Center for Advanced Intelligence Project\\
  \texttt{eugene.ndiaye@riken.jp} \\
  % David S.~Hippocampus\thanks{Use footnote for providing further information
  %   about author (webpage, alternative address)---\emph{not} for acknowledging
  %   funding agencies.} \\
  % Department of Computer Science\\
  % Cranberry-Lemon University\\
  % Pittsburgh, PA 15213 \\
  % \texttt{hippo@cs.cranberry-lemon.edu} \\
  % examples of more authors
  \And
  Ichiro Takeuchi\\
  Nagoya Institute of Technology\\
  \texttt{takeuchi.ichiro@nitech.ac.jp } \\
  % Coauthor \\
  % Affiliation \\
  % Address \\
  % \texttt{email} \\
  % \AND
  % Coauthor \\
  % Affiliation \\
  % Address \\
  % \texttt{email} \\
  % \And
  % Coauthor \\
  % Affiliation \\
  % Address \\
  % \texttt{email} \\
  % \And
  % Coauthor \\
  % Affiliation \\
  % Address \\
  % \texttt{email} \\
}
\begin{document}

\maketitle

\begin{abstract}
If you are predicting the label $y$ of a new object with $\hat y$, how confident are you that $y = \hat y$? Conformal prediction methods provide an elegant framework for answering such question by building a $100 (1 - \alpha)\%$ confidence region without assumptions on the distribution of the data. It is based on a refitting procedure that parses all the possibilities for $y$ to select the most likely ones. Although providing strong coverage guarantees, conformal set is impractical to compute exactly for many regression problems. We propose efficient algorithms to compute conformal prediction set using approximated solution of (convex) regularized empirical risk minimization. Our approaches rely on a new homotopy continuation technique for tracking the solution path with respect to sequential changes of the observations. We also provide a detailed analysis quantifying its complexity.
\end{abstract}

%!TEX root = ../neurips_2019.tex

%%%%%%%%%%%%%%%%%%%%%%%%%%%%%%%%%%%%%%%%%%%%%%%%%%%%%%%%%%%%%%%%%%%%%%%%%%%%%%%%%%%%%%%%%%%%%%%%%%%
\section{Introduction}
%%%%%%%%%%%%%%%%%%%%%%%%%%%%%%%%%%%%%%%%%%%%%%%%%%%%%%%%%%%%%%%%%%%%%%%%%%%%%%%%%%%%%%%%%%%%%%%%%%%

% In online setting where one observe data one after another, conformal prediction set \cite{Vovk_Gammerman_Shafer05, Shafer_Vovk08} provides a framework for sequentially constructing a set that contains the next observation with a prescribed confidence level. It can be used in probabilistic forecasts for climate analysis where it is important to quantify the uncertainty by providing a range where future observations will lie with high probability. For example by providing an ensemble of value for the possible future temperature instead of giving a single one. One can refer to \cite[Chapter 7]{Jolliffe_Stephenson12} for others probabilistic ensemble forecasts.

In many practical applications of regression models it is beneficial to provide, not only a point-prediction, but also a prediction set that has some desired coverage property. This is especially true when a critical decision is being made based on the prediction, e.g., in medical diagnosis or experimental design. \emph{Conformal prediction} is a general framework for constructing non-asymptotic and distribution-free prediction sets. Since the seminal work of \cite{Vovk_Gammerman_Shafer05, Shafer_Vovk08}, the statistical properties and computational algorithms for conformal prediction have been developed for a variety of machine learning problems such as density estimation, clustering, and regression - see the review of \cite{Balasubramanian_Ho_Vovk14}.

Let $\Data_n = \{(x_1, y_1), \cdots, (x_n, y_{n})\}$ be a sequence of features and labels of random variables in $\bbR^p \times \bbR$ from a distribution $\mathbb{P}$. Based on observed data $\Data_n$ and a new test instance $x_{n+1}$ in $\bbR^p$, the goal of conformal prediction is to build a $100(1 - \alpha)\%$ confidence set that contains the unobserved variable $y_{n+1}$ for $\alpha$ in $(0, 1)$, without any specific assumptions on the distribution $\mathbb{P}$. 

The conformal prediction set for $y_{n+1}$ is defined as the set of $z \in \bbR$ whose \emph{typicalness} is sufficiently large. The typicalness of each $z$ is defined based on the residuals of the regression model, trained with an augmented training set $\Data_{n+1}(z) = \Data_n \cup (x_{n+1}, z)$. On average, prediction sets constructed within a conformal prediction framework
% (called \emph{conformal prediction sets})
are shown to have a desirable coverage property, as long as the training instances $\{(x_i, y_i)\}_{i=1}^{n+1}$ are exchangeable, and the regression estimator is symmetric with respect to the training instances (even when the model is not correctly specified). 

Despite these attractive properties, the computation of conformal prediction sets has been intractable since one needs to fit infinitely many regression models with an augmented training set $\Data_{n+1}(z)$, for all possible $z \in \bbR$. Except for simple regression estimators with quadratic loss (such as least-square regression, ridge regression or lasso estimators) where an explicit and exact solution of the model parameter can be written as a piece of a linear function in the observation vectors, the computation of the full and exact conformal set for the general regression problem is challenging and still open.  
% By definition, in order to compute the conformal prediction set, one needs to fit infinitely many regression models with augmented training set $\cD_{n+1}(y)$ for all possible $y \in \bbR$. 
% This is only possible when the regression model is explicitly represented as a function of $y$ as in the cases of least-square regression and ridge regression.
% %
% Recently, it has been shown that the regression model obtained by LASSO estimator can be also explicitly represented as a function of $y$, and the computation of conformal prediction sets is possible~\cite{}.

\paragraph{Contributions.}
We propose a general method to compute the full conformal prediction set for a wider class of regression estimators. The main novelties are summarized in the following points:

\begin{itemize}
\item We introduce a new homotopy continuation technique, inspired by \cite{Giesen_Laue_Mueller_Swiercy12, Ndiaye_Le_Fercoq_Salmon_Takeuchi2018}, which can efficiently update an approximate solution with tolerance $\epsilon > 0$, when the data are streamed sequentially. For this, we show that the variation of the optimization error only depends on the loss on the new input data. Thus, exploiting the regularity of the loss, we can provide a range of observations for which an approximate solution is still valid. This allows us to approximately fit infinitely many regression models for all possible $z$ in a pre-selected range $[y_{\min}, y_{\max}]$, using only a finite number of candidate $z$. For example, when the loss function is smooth, the number of model fittings required for constructing the prediction set is $O(1/\sqrt{\epsilon})$.
% \eug{We propose an algorithm to compute the finite-length sequence of $y$s in $\bbR$ for which the regression models must be trained. The main technical tool we introduce in this paper is the \emph{approximate homotopy continuation}~\cite{}. 
% %
% This technique can be used for approximately solving a parametric optimization problem by solving only a finite number of optimization problems~\cite{}. 
% %
% Our main idea is to develop an approximate homotopy continuation method for approximately fitting infinitely many regression models for all possible $y \in \bbR$. 
%
\item Exploiting the approximation error bounds of the proposed homotopy continuation method, we can construct the prediction set based on the $\epsilon$-solution, which satisfies the same valid coverage properties under the same mild assumptions as the conformal prediction framework.
% After describing the background and problem setup in \S2, we present in \S3 an approximate homotopy continuation method that can compute the entire path of regression models for all possible $y \in \bbR$ such that approximation ebbRor is bounded by an arbitrarily specified tolerance $\epsilon$.
%
% Then, in \S4, we present how to construct a prediction set that has valid coverage guarantee by exploiting the approximation ebbRor bound of the approximate homotopy continuation method. 
%
When the approximation tolerance $\epsilon$ decreases to $0$, the prediction set converges to the \emph{exact} conformal prediction set which would be obtained by fitting an infinitely large number of regression models. Furthermore, if the loss function of the regression estimator is smooth and some other regularity conditions are satisfied, the prediction set constructed by the proposed method is shown to contain the \emph{exact} conformal prediction set. 
%
% We demonstrate the performance of the proposed method through numerical experiments in ...5.
\end{itemize}
For reproducibility, our implementation is available in 
\begin{center}
\url{https://github.com/EugeneNdiaye/homotopy_conformal_prediction}
\end{center}

\paragraph{Notation.}
For a non zero integer $n$, we denote $[n]$ to be the set $\{1, \cdots, n\}$. The dataset of size $n$ is denoted $\Data_n = (x_i, y_i)_{i \in [n]}$, the row-wise feature matrix $X = [x_1, \cdots, x_{n+1}]^\top$ , and $X_{[n]}$ is its restriction to the $n$ first rows.
% The constant $\alpha$ in $(0,1)$ represents a coverage guarantee and $\epsilon>0$ a tolerance on the optimization ebbRor.
Given a proper, closed and convex function $f: \bbR^n \to \bbR \cup \{+\infty\}$, we denote $\dom f = \{x \in \bbR^n: f(x) < +\infty\}$. Its Fenchel-Legendre transform is $f^*:\bbR^n \to \bbR \cup \{+\infty\}$ defined by $f^*(x^*) = \sup_{x \in \dom f} \langle x^* , x \rangle - f(x)$.
The smallest integer larger than a real value $r$ is denoted $\lceil r \rceil$.
We denote by $Q_{1 - \alpha}$, the $(1 - \alpha)$-quantile of a real valued sequence $(U_i)_{i \in [n + 1]}$, defined as the variable $Q_{1 - \alpha} = U_{(\lceil (n+1)(1-\alpha) \rceil)}$, where $U_{(i)}$ are the $i$-th order statistics.
For $j$ in $[n+1]$, the rank of $U_j$ among $U_1, \cdots, U_{n+1}$ is defined as $\mathrm{Rank}(U_j) = \sum_{i=1}^{n+1}\mathbb{1}_{U_i \leq U_j}$.
 % = i$ if $U_j = U_{(i)}$.
The interval $[a - \tau, a + \tau]$ will be denoted $[a \pm \tau]$.

%!TEx root = ../neurips_2019.tex

%%%%%%%%%%%%%%%%%%%%%%%%%%%%%%%%%%%%%%%%%%%%%%%%%%%%%%%%%%%%%%%%%%%%%%%%%%%%%%%%%%%%%%%%%%%%%%%%%%%
\section{Background and Problem Setup}
%%%%%%%%%%%%%%%%%%%%%%%%%%%%%%%%%%%%%%%%%%%%%%%%%%%%%%%%%%%%%%%%%%%%%%%%%%%%%%%%%%%%%%%%%%%%%%%%%%%

We consider the framework of regularized empirical risk minimization (see for instance \cite{ShalevShwartz_BenDavid14}) with a convex loss function $\ell : \bbR \times \bbR \mapsto \bbR$, a convex regularizer $\Omega : \bbR \mapsto \bbR$ and a positive scalar $\lambda$:
\begin{equation}\label{eq:primal_problem}
\hat \beta \in \argmin_{\beta \in \bbR^p} P(\beta) := \sum_{i=1}^{n} \ell(y_i, x_{i}^{\top} \beta) + \lambda \Omega(\beta) \enspace.
\end{equation}
For simplicity, we will assume that for any real values $z$ and $z_0$, we have $\ell(z_0, z)$ and $\ell(z, z_0)$ are non negative, $\ell(z_0, z_0)$ and $\ell^{*}(z_0, 0)$ are equal to zero. These assumptions are easy to satisfy and we refer the reader to the appendix for more details.

\paragraph{Examples.}
% \eugene{Put the examples in a table, ++ references on practical applications}
A popular example of a loss function found in the literature is \texttt{power norm regression}, where $\ell(a, b) = |a - b|^q$. When $q=2$, this corresponds to classical linear regression. Cases where $q \in [1, 2)$ are common in robust statistics. In particular, $q=1$ is known as least absolute deviation. The \texttt{logcosh} loss $\ell(a, b) = \gamma\log(\cosh(a - b)/\gamma)$ is a differentiable alternative to the $\ell_{\infty}$ norm (Chebychev approximation). One can also have the \texttt{Linex} loss function \cite{Gruber10, Chang_Hung07} which provides an asymmetric loss $\ell(a, b) = \exp(\gamma(a - b)) - \gamma(a - b) - 1$, for $\gamma \neq 0$. Any convex regularization functions $\Omega$ \eg Ridge \cite{Hoerl_Kennard70} or sparsity inducing norm \cite{Bach_Jenatton_Mairal_Obozinski12} can be considered.

For a new test instance $x_{n+1}$, the goal is to construct a prediction set $\hat \Gamma^{(\alpha)}(x_{n+1})$ for $y_{n+1}$ such that
\begin{align}
 \label{eq:coverage}
 \mathbb{P}^{n+1}(y_{n+1} \in \hat \Gamma^{(\alpha)}(x_{n+1})) \geq 1 - \alpha ~\text{ for }
 \alpha \in (0, 1) \enspace.
\end{align}
%

%%%%%%%%%%%%%%%%%%%%%%%%%%%%%%%%%%%%%%%%%%%%%%%%%%%%%%%%%%%%%%%%%%%%%%%%%%%%%%%%%%%%%%%%%%%%%%%%%%%
\subsection{Conformal Prediction}\label{subsec:Conformal_Prediction}
%%%%%%%%%%%%%%%%%%%%%%%%%%%%%%%%%%%%%%%%%%%%%%%%%%%%%%%%%%%%%%%%%%%%%%%%%%%%%%%%%%%%%%%%%%%%%%%%%%%

Conformal prediction \cite{Vovk_Gammerman_Shafer05} is a general framework for constructing confidence sets, with the remarkable properties of being distribution free, having a finite sample coverage guarantee, and being able to be adapted to any estimator under mild assumptions. We recall the arguments in \cite{Shafer_Vovk08, Lei_GSell_Rinaldo_Tibshirani_Wasserman18}.
% to construct a typicalness function $\hat\pi$ based on rank statistics that yields to distribution-free inference methods.
% They are based on the fact that the rank of one variable among an \emph{exchangeable and identically distributed} sequence is (sub)-uniformly distributed \cite{Brocker_Kantz11}.
% % 
% \begin{lemma}\label{lm:distribution_of_rank}
% Let $U_1, \ldots, U_{n+1}$ be exchangeable and identically distributed sequence of real valued random variables. Then for any $\alpha \in (0, 1)$, we have $\mathbb{P}^{n+1}(\mathrm{Rank}(U_{n+1}) \leq (n+1)(1 - \alpha)) \geq 1 - \alpha$.
% \end{lemma} 
% 
% We remind that $y_{n+1}$ is the \emph{unknown} target variable. 
% We consider any measurable function $R(\cdot) \in \bbR^{n+1}$ whose output is independent of the order in the input set. 

Let us introduce the extension of the optimization problem~\eqref{eq:primal_problem} with augmented training data $\Data_{n+1}(z) := \Data_{n} \cup \{(x_{n+1}, z)\}$ for $z \in \bbR$:
\begin{equation}\label{eq:augmented_primal_problem}
\hat \beta(z) \in \argmin_{\beta \in \bbR^p} P_z(\beta) := \sum_{i=1}^{n} \ell(y_i, x_{i}^{\top} \beta) + \ell(z, x_{n+1}^{\top}\beta) + \lambda \Omega(\beta) \enspace.
\end{equation}
Then, for any $z$ in $\bbR$, we define the conformity measure for $\Data_{n+1}(z)$ as
\begin{align}\label{eq:arbitrary_conformity_measure}
&\forall i \in [n],\, \hat R_{i}(z) = \psi(y_i, x_{i}^{\top}\hat \beta(z)) \text{ and } \hat R_{n+1}(z) = \psi(z, x_{n+1}^{\top} \hat \beta(z)) \enspace,
\end{align}
where $\psi$ is a real-valued function that is invariant with respect to any permutation of the input data. For example, in a linear regression problem, one can take the absolute value of the residual to be a conformity measure function \ie $\hat R_{i}(z) = |y_i - x_{i}^\top \hat \beta(z)|$. 

The main idea for constructing a conformal confidence set is to consider the \emph{typicalness} of a candidate point $z$ measured as
% 
% \begin{equation}\label{eq:general_def_of_pi}
% \hat \pi(z) = \hat \pi(\Data(z))  := \frac{1}{n+1}\sum_{i=1}^{n+1} \mathbb{1}_{\hat R_{i}(z) \geq \hat R_{n+1}(z)} \enspace.
% \end{equation}
\begin{equation}\label{eq:general_def_of_pi}
\hat \pi(z) = \hat \pi(\Data_{n+1}(z))  := 1 - \frac{1}{n+1} \mathrm{Rank}(\hat R_{n+1}(z)) \enspace.
\end{equation}
% https://www.google.com/search?client=ubuntu&channel=fs&q=translate&ie=utf-8&oe=utf-8
% % 
% Then it is easy to see that for any $z$, the statistic $(n + 1) \times (1 - \hat \pi(z)) = \sum_{i=1}^{n+1} \mathbb{1}_{\hat R_{i}(z) < \hat R_{n+1}(z)}$ which is the \emph{rank} of $\hat R_{n+1}(z)$. 
If the sequence $(x_i, y_i)_{i \in [n+1]}$ is exchangeable and identically distributed, then $(\hat R_{i}(y_{n+1}))_{i \in [n+1]}$ is also , by the invariance of $\hat R$ \wrt permutations of the data. Since the rank of one variable among an exchangeable and identically distributed sequence is (sub)-uniformly distributed (see \cite{Brocker_Kantz11}) in $\{1, \cdots, n+1\}$, we have $\mathbb{P}^{n+1}(\hat \pi(y_{n+1}) \leq \alpha) \leq \alpha$ for any $\alpha$ in $(0, 1)$. This implies that the function $\hat \pi$ takes a small value on atypical data. Classical statistics for hypothesis testing, such as a $p$-value function, satisfy such a condition under the null hypothesis (see \cite[Lemma 3.3.1]{Lehmann_Romano06}).
% Then by application of \Cref{lm:distribution_of_rank} to the former sequence, we conclude that the function $\hat \pi$ in \Cref{eq:general_def_of_pi} applied to $y_{n+1}$ satisfies condition~\eqref{eq:typicalness}. 
% 
In particular, this implies that the desired coverage guarantee in \Cref{eq:coverage} is verified by the conformal set defined as
\begin{equation}\label{eq:exact_conformal_set}
\hat \Gamma^{(\alpha)}(x_{n+1}) := \{z \in \bbR:\, \hat \pi(z) > \alpha \}
\enspace.
\end{equation}
% satisfies
% \begin{equation}\label{eq:exact_conformal_set_guarantee}
% \mathbb{P}^{n+1}(y_{n+1} \in \hat \Gamma^{(\alpha)}(x_{n+1})) \geq 1 - \alpha \enspace.
% \end{equation}
% 
% \begin{proposition}\label{prop:vovk_conformal_set_guarantee}
% Let $\{(x_1, y_1), \cdots, (x_n, y_n), (x_{n+1}, y_{n+1})\}$ be a sequence of exchangeable and identically distributed variables from a probability $\mathbb{P}$ and $\alpha \in (0,1)$. The conformal set defined as
% \begin{equation}\label{eq:exact_conformal_set}
% \hat \Gamma^{(\alpha)}(x_{n+1}) := \{y \in \bbR:\, \hat \pi(z) > \alpha \}
% \enspace,
% \end{equation}
% satisfies
% \begin{equation}\label{eq:exact_conformal_set_guarantee}
% \mathbb{P}^{n+1}(y_{n+1} \in \hat \Gamma^{(\alpha)}(x_{n+1})) \geq 1 - \alpha \enspace.
% \end{equation}
% \end{proposition}
% 
% Noting that $\hat \pi(z) = \frac{1}{n+1}\sum_{i=1}^{n+1} \mathbb{1}_{\hat R_{i}(z) \geq \hat R_{n+1}(z)}$, 
The conformal set gathers the real value $z$ such that $\hat \pi(z) > \alpha$, if and only if $\hat R_{n+1}(z)$ is ranked no higher than $\lceil(n+1)(1 - \alpha)\rceil$, among $\hat R_{i}(z)$ for all $i$ in $[n]$. 
% Thus, the conformal set~\eqref{eq:exact_conformal_set} can be conveniently rewritten as
% \begin{equation}\label{eq:rewrite_confset}
% \hat \Gamma^{(\alpha)}(x_{n+1}) = \{y \in \bbR:\, \hat R_{n+1}(z) \leq \hat Q_{y, 1-\alpha} \} \enspace,
% \end{equation}
% where $\hat Q_{y, 1-\alpha}$ is the $(1-\alpha)$-empirical quantile of the sequence of residuals $(\hat R_{i}(z))_{i \in [n+1]}$.
% 
For regression problems where $y_{n+1}$ lies in a subset of $\bbR$, obtaining the conformal set $\hat \Gamma^{(\alpha)}(x_{n+1})$ in \Cref{eq:exact_conformal_set} is computationally challenging. It requires re-fitting the prediction model $\hat \beta(z)$ for infinitely many candidates $z$ in $\bbR$ in order to compute a conformity measure such as $\hat R_{i}(z) = |y_i - x_{i}^{\top}\hat \beta(z)|$.

\paragraph{Existing Approaches for Computing a Conformal Prediction Set.}
In Ridge regression, for any $x$ in $\bbR^p$, $z \mapsto x^{\top}\hat \beta(z)$ is a linear function of $z$, implying that $\hat R_{i}(z)$ is piecewise linear. Exploiting this fact, an exact conformal set $\hat \Gamma^{(\alpha)}(x_{n+1})$ for Ridge regression was efficiently constructed in \cite{Nouretdinov_Melluish_Vovk01}.
% Similarly, using the piecewise linearity in $\lambda$ and $z$ of the Lasso path provided by the \texttt{Lars} algorithm \cite{Efron_Hastie_Johnstone_Tibshirani04}, \cite{Hebiri10} built a sequence of conformal set for the Lasso associated to the transition points of the \texttt{Lars} with the observed data $\Data_n$. Nevertheless, such procedure breaks the proof technique for the coverage guarantee since the exchangeability of the sequence $(\hat R_{i}(y_{n+1}))_{i \in [n+1]}$ is not necessarily maintained. Later,
Similarly, using the piecewise linearity in $z$ of the Lasso solution, \cite{Lei19} proposed a piecewise linear homotopy under mild assumptions, when a single input sample point is perturbed.
% This finally allows to compute the whole solution path $y \mapsto \hat \beta(z)$ and successfully provides a conformal set for the Lasso and Elastic Net.
Apart from these cases of quadratic loss with Ridge and Lasso regularization, where an explicit formula of the estimator is available, computing such a set is often infeasible. Also, a known drawback of exact path computation is its exponential complexity in the worst case \cite{Gartner_Jaggi_Maria12}, and numerical instabilities due to multiple inversions of potentially ill-conditioned matrices.

Another approach is to split the dataset into a training set - in which the regression model is fitted, and a calibration set - in which the conformity scores and their ranks are computed. Although this approach avoids the computational bottleneck of the full conformal prediction framework, statistical efficiencies are lost both in the model fitting stage and in the conformity score rank computation stage, due to the effect of a reduced sample size. It also adds another layer of randomness, which may be undesirable for the construction of prediction intervals~\cite{Lei19}. 

A common heuristic approach in the literature is to evaluate the typicalness $\hat{\pi}(z)$ only for an arbitrary finite number of grid points. Although the prediction set constructed by those finite number of $\hat{\pi}(z)$ might roughly mimic the conformal prediction set, the desirable coverage properties are no longer maintained. To overcome this issue, \cite{Chen_Chun_Barber18} proposed a discretization strategy with a more careful procedure to round the observation vectors, but failed to exactly preserve the $1 - \alpha$ coverage guarantee. In the appendix, we discuss in detail critical limitations of such an approach.

%!TEX root = ../neurips2019.tex

%%%%%%%%%%%%%%%%%%%%%%%%%%%%%%%%%%%%%%%%%%%%%%%%%%%%%%%%%%%%%%%%%%%%%%%%%%%%%%%%%%%%%%%%%%%%%%%%%%%
\section{Homotopy Algorithm}
\label{sec:Homotopy_Algorithm}
%%%%%%%%%%%%%%%%%%%%%%%%%%%%%%%%%%%%%%%%%%%%%%%%%%%%%%%%%%%%%%%%%%%%%%%%%%%%%%%%%%%%%%%%%%%%%%%%%%%

\begin{algorithm}[!t]
\caption{$\epsilon$-\texttt{online\_homotopy}}
\label{alg:eps_online_homotopy}
\begin{algorithmic}
{\small\STATE {\bfseries Input:} $\Data_n = \{(x_1, y_1), \cdots, (x_n, y_{n})\}, x_{n+1}, [y_{\min}, y_{\max}], \epsilon_{0} < \epsilon$
\STATE Initialization: $z_{t_0} = x_{n+1}^{\top}\beta$ where $\beta$ is an $\epsilon_0$-solution for the problem \eqref{eq:primal_problem} using only $\Data_n$
\REPEAT
\STATE $z_{t_{k+1}} = z_{t_k} \pm s_{\epsilon}$ where $s_{\epsilon} = \sqrt{\frac{2}{\nu}(\epsilon - \epsilon_0)}$ if the loss is $\nu$-smooth
% \STATE $\mathcal{D}_{n+1}(z_{t+1}) = \mathcal{D}_n \cup \{(x_{n+1}, z_{t+1})\}$
\STATE Get $\beta(z_{t_{k+1}})$ by minimizing $P_{z_{t_{k+1}}}$ up to accuracy $\epsilon_{0} < \epsilon$ \COMMENT{warm started with $\beta(z_{t_k})$}
\UNTIL{$[y_{\min}, y_{\max}]$ is covered}
\STATE {\bfseries Return:} $\{z_{t_k}, \beta(z_{t_k})\}_{k \in [T_{\epsilon}]}$}
\end{algorithmic}
\end{algorithm}

In constructing an exact conformal set, we need to be able to compute the entire path of the model parameters $\hat{\beta}(z)$; which is obtained after solving the augmented optimization problem in \Cref{eq:augmented_primal_problem}, for any $z$ in $\bbR$. In fact, two problems arise. First, even for a single $z$, $\hat{\beta}(z)$ may not be available because, in general, the optimization problem \emph{cannot} be solved exactly \cite[Chapter 1]{Nesterov04}. Second,  except for simple regression problems such as Ridge or Lasso, the entire exact path of $\hat{\beta}(z)$ cannot be computed infinitely many times.

Our basic idea to circumvent this difficulty is to rely on approximate solutions at a given precision $\epsilon>0$. Here, we call an $\epsilon$-solution any vector $\beta$ such that its objective value satisfies
\begin{equation} \label{eq:approx_solution}
P_{z}(\beta) - P_{z}(\hat \beta(z)) \leq \epsilon \enspace.
\end{equation}
An $\epsilon$-solution can be found efficiently, under mild assumptions on the regularity of the function being optimized. In this section, we show that finite paths of $\epsilon$-solutions can be computed for a wider class of regression problems. Indeed, it is not necessary to re-calculate a new solution for neighboring observations - \ie $\beta(z)$ and $\beta(z_0)$ have the same performance when $z$ is close to $z_0$. We develop a precise analysis of this idea. Then, we show how this can be used to effectively approximate the conformal prediction set in \Cref{eq:exact_conformal_set} based on exact solution, while preserving the coverage guarantee.

We recall the dual formulation \cite[Chapter 31]{Rockafellar97} of
 % \Cref{eq:primal_problem} and 
\Cref{eq:augmented_primal_problem}:
\begin{align}
% \hat \theta &\in \argmax_{\theta \in \bbR^n} D(\theta) := -\sum_{i=1}^{n} \ell^{*}(y_i, -\lambda \theta) - \lambda \Omega^*(X_{[n]}^\top\theta)\enspace, \label{eq:dual_problem} \\
% 
\hat \theta(z) &\in \argmax_{ \theta \in \bbR^{n+1}} D_z( \theta) := -\sum_{i=1}^{n} \ell^{*}(y_i, -\lambda  \theta_{i}) - \ell^{*}(z, -\lambda  \theta_{n+1}) - \lambda \Omega^*(X^\top \theta) \label{eq:augmented_dual_problem}
\enspace.
\end{align}
For a primal/dual pair of vectors $(\beta(z), \theta(z))$ in $\dom P_z \times \dom D_z$, the duality gap is defined as 
$$\Gap_z(\beta(z), \theta(z)) := P_z(\beta(z)) - D_z(\theta(z))\enspace.$$
Weak duality ensures that $P_z(\beta(z)) \geq D_z(\theta(z))$, which yields an upper bound for the approximation error of $\beta(z)$ in \Cref{eq:approx_solution} \ie
$$P_z(\beta(z)) - P_z(\hat \beta(z)) \leq \Gap_z(\beta(z), \theta(z)) \enspace.$$
This will allow us to keep track of the approximation error when the parameters of the objective function change.
Given any $\beta$ such that $\Gap(\beta, \theta) \leq \epsilon$ \ie an $\epsilon$-solution for problem~\eqref{eq:primal_problem}, we explore the candidates for $y_{n+1}$ with the parameterization of the real line $z_t$ defined as
\begin{equation}\label{eq:parameterization_of_z}
z_t := z_0 + t, \text{ for } t \in \bbR \text{ and } z_0 = x_{n+1}^{\top} \beta \enspace.
\end{equation}
This additive parameterization was used in \cite{Lei19} for the case of the Lasso. It provides the nice property that adding $(x_{n+1}, z_0)$ as the $(n+1)$-th observation does not change the objective value of $\beta$ \ie $P(\beta) = P_{z_0}(\beta)$. Thus, if a vector $\beta$ is an $\epsilon$-solution for $P$, it will remain so for $P_{z_0}$. Interestingly, such a choice is still valid for a sufficiently small $t$. We show that, depending on the regularity of the loss function, we can precisely derive a range of the parameter $t$ so that $\beta$ remains a valid $\epsilon$-solution for $P_{z_t}$ when the dataset $\Data_n$ is augmented with $\{(x_{n+1}, z_t)\}$. 
% \eug{Similar tools for the discretization of $[y_{\min}, y_{\max}]$ ...}
% To simplify the notation, we set $\mathcal{D}^{(t)} = \mathcal{D}^{(z_t)}$.
% 
% When the number of observations in the dataset is augmented, the dimensionality of the problem is increased only in the dual space. Hence $\beta$ in $\dom P_n$ implies $\beta$ in $\dom P_{n+1}$. Now the question is: if $\beta$ is an $\epsilon$-solution for $P_n$, how good is it for the objective with augmented data $P_{n+1}$? In the next lemma, we show how the variation of the duality gap evolves between the datasets $\mathcal{D}_n$ and $\mathcal{D}_{n+1}$.

We define the variation of the duality gap between real values $z$ and $z_0$ to be
$$\Delta G(x_{n+1}, z, z_0) := \Gap_{z}(\beta, \theta) - \Gap_{z_0}(\beta, \theta) \enspace.$$

\begin{lemma}\label{lm:variation_gap} For any $(\beta, \theta) \in \dom P_w \times \dom D_w$ for $w \in\{z_0, z\}$, we have
\begin{equation*}
\Delta G(x_{n+1}, z, z_0) = [\ell(z, x_{n+1}^{\top}\beta) - \ell(z_0, x_{n+1}^{\top}\beta)] + [\ell^*(z, -\lambda \theta_{n+1}) - \ell^*(z_0, -\lambda \theta_{n+1})] \enspace.
\end{equation*}
\end{lemma}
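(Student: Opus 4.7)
The proof is a direct algebraic expansion, so the plan is simply to track which terms in the gap depend on the label $z$ and which do not. First I would rewrite
\[
\Delta G(x_{n+1}, z, z_0) = \bigl[P_z(\beta) - P_{z_0}(\beta)\bigr] - \bigl[D_z(\theta) - D_{z_0}(\theta)\bigr]
\]
by plugging in the definition $\mathrm{Gap}_w = P_w - D_w$ for $w \in \{z_0, z\}$.

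Next I would use the explicit formulas \eqref{eq:augmented_primal_problem} and \eqref{eq:augmented_dual_problem}. In $P_z(\beta)$, only the $(n{+}1)$-th loss term $\ell(z, x_{n+1}^\top \beta)$ depends on $z$; the sum $\sum_{i=1}^n \ell(y_i, x_i^\top\beta)$ and $\lambda\Omega(\beta)$ are untouched. Hence
\[
P_z(\beta) - P_{z_0}(\beta) = \ell(z, x_{n+1}^\top\beta) - \ell(z_0, x_{n+1}^\top\beta).
\]
Similarly, in $D_z(\theta)$ only $-\ell^*(z, -\lambda\theta_{n+1})$ involves $z$, so
\[
D_z(\theta) - D_{z_0}(\theta) = -\bigl[\ell^*(z, -\lambda\theta_{n+1}) - \ell^*(z_0, -\lambda\theta_{n+1})\bigr].
\]
Substituting these two identities into the rewriting of $\Delta G$ gives precisely the claim.

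There is essentially no obstacle; the only thing to check is that every quantity is well-defined and finite, which is exactly what the hypothesis $(\beta, \theta) \in \dom P_w \times \dom D_w$ for $w \in \{z_0, z\}$ guarantees, so no limiting or domain argument is needed. The lemma is then a one-line consequence of the fact that among all the building blocks of $P_z$ and $D_z$, only the conjugate pair of terms associated with the new observation $(x_{n+1}, z)$ carries any dependence on the label.
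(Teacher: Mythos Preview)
Your argument is correct and matches the paper's proof essentially line for line: the paper also rewrites $\Delta G$ as $[P_z(\beta)-P_{z_0}(\beta)]-[D_z(\theta)-D_{z_0}(\theta)]$ and then observes that only the $(n{+}1)$-th primal loss term and the $(n{+}1)$-th dual conjugate term depend on $z$. There is nothing to add.
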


% \begin{lemma}\label{lm:variation_gap}
% Let $(\beta, \theta)$ be any primal/dual vector in $\dom P \times \dom D$ and $\theta^+ = (\theta, 0)$ in $\bbR^{n+1}$. For any real value $z$, the variation of the duality gap is equal to the loss between $z$ and $x_{n+1}^{\top} \beta$ \ie
% \begin{align*}
% \Delta G(x_{n+1}, y) := \Gap_{z}(\beta, \theta^+) - G(\beta, \theta)
% % 
% = \ell(z, x_{n+1}^{\top} \beta) \enspace.
% \end{align*}
% \end{lemma}

% Given a loss function $\ell$ such that $\ell(a,a)=0$ for any $a$ and according to \Cref{lm:variation_gap}, if $z$ is close to $x_{n+1}^{\top} \beta$, then the gap $\Gap_{n+1}(\beta, \theta^+)$, using $n+1$ observations, is close to the gap $G(\beta, \theta)$ using $n$ observations. The price to pay for not solving the optimization problem with $n+1$ data is simply given as the loss at the $(n+1)$-th observation. Thus, it is enough to exploit the regularity (\eg smoothness) of the loss function, to upper bound the variation of the duality gap and so the optimization error.

% Given a loss function $\ell$ such that $\ell(a,a)=0$ for any $a$ and according to
\Cref{lm:variation_gap} showed that the variation of the duality gap between $z$ and $z_0$ depends only on the variation of the loss function $\ell$, and its conjugate $\ell^*$. Thus, it is enough to exploit the regularity (\eg smoothness) of the loss function in order to obtain an upper bound for the variation of the duality gap (and therefore the optimization error).

\paragraph{Construction of Dual Feasible Vector.}

A generic method for producing a dual-feasible vector is to re-scale the output of the gradient mapping. For a real value $z$, let $\beta(z)$ be any primal vector and let us denote $Y_z = (y_1, \cdots, y_n, z)$.
% 
% $$
% G(\beta(z)) := [\partial_2 \ell(y_1, x_{1}^{\top} \beta(z)), \cdots,\partial_2 \ell(y_n, x_{n}^{\top} \beta(z)), \partial_2 \ell(z, x_{n+1}^{\top} \beta(z))] \enspace.
% $$
 
Optimality conditions for \eqref{eq:augmented_primal_problem} and \eqref{eq:augmented_dual_problem} implies $\hat \theta(z) = - \nabla \ell(Y_z, X\hat \beta(z)) / \lambda$, which suggests we can make use of \cite{Ndiaye_Le_Fercoq_Salmon_Takeuchi2018}
% \cite[Prop. 11]{Ndiaye18}
% \begin{equation}
%  \label{eq:def_dual_vector}
%  \theta(z) :=\frac{-G(\beta(z))}{\max(\lambda_{t}, \sigma_{\dom \Omega^*}^{\circ} (X^\top G(\beta(z)))} \in \dom D_z\enspace,
% \end{equation}
\begin{equation}
 \label{eq:def_dual_vector}
 \theta(z) :=\frac{-\nabla \ell(Y_z, X\beta(z))}{\max\{\lambda_{t}, \sigma_{\dom \Omega^*}^{\circ} (X^\top \nabla \ell(Y_z, X\beta(z)))\}} \in \dom D_z\enspace,
\end{equation}

where $\sigma$ is the support function and $\sigma^{\circ}$ its polar function.
When the regularization is a norm $\Omega(\cdot) = \norm{\cdot}$, then $\sigma_{\dom \Omega^*}^{\circ}$ is the associated dual norm $\norm{\cdot}_*$. When $\Omega$ is strongly convex, then the dual vector in \Cref{eq:def_dual_vector} simplifies to $\theta(z) = -\nabla \ell(Y_z, X\beta(z)) / \lambda$.

% \begin{proof}
% Let $\delta \in \bbR$ such that $\theta_{\delta}^{+} = (\theta, \delta)^\top \in \bbR^{n+1} \in \dom D_{z}$. We have
% % 
% \begin{align*}
% \Delta G(\delta) &:= \Gap_{z}(\beta, \theta_{\delta}^{+}) - G(\beta, \theta)\\
% % 
% &= [P_{z}(\beta) - P(\beta)] - [D_{z}(\theta_{\delta}^{+}) - D(\theta)]  \\
% % 
% &= \ell(z, x_{n+1}^{\top}\beta) + \ell^{*}(z, -\lambda \delta) + \lambda \, [\Omega^*(X_{[n]}^\top\theta + \delta x_{n+1}^{\top}) - \Omega^*(X_{[n]}^\top \theta)] \enspace.
% \end{align*}
% % 
% We choose $\delta=0$; which is admissible because $0 \in \dom \ell^{*}(z, \cdot)$ if and only if $\ell(z, \cdot)$ is bounded from below as assumed. The result follows the observation that $\Delta G(0) = \Delta G(x_{n+1}, y_{n+1})$.
% \end{proof}
% 
% 
% \label{lm:variation_gap}

Using $\theta(z_0)$ in \Cref{eq:def_dual_vector} with $z_0 = x_{n+1}^{\top}\beta$ greatly simplifies the expression for the variation of the duality gap between $z_t$ and $z_0$ in \Cref{lm:variation_gap} to 
\begin{equation*}
\Delta G(x_{n+1}, z_t, z_0) = \ell(z_t, x_{n+1}^{\top}\beta) \enspace.
\end{equation*}
This directly follows from the assumptions $\ell(z_0, z_0)=\ell^*(z_0, 0) = 0$ and by construction of the dual vector $\theta_{n+1} \propto \partial_2 \ell(z_0, x_{n+1}^{\top}\beta) = \partial_2 \ell(z_0, z_0) = 0$. Whence, assuming that the loss function is $\nu$-smooth (see the appendix for more details and extensions to other regularity assumptions) and using the parameterization in \Cref{eq:parameterization_of_z}, we obtain
% Given a real value $b$, we denote $\ell_{[b]}(a) = \ell(a, b)$ which is assumed to be a $\nu$-smooth function \ie 
% % 
% \begin{equation}\label{eq:smoothness_inequality}
% \ell_{[b]}(a) \leq \ell_{[b]}(a_0) + \langle {\ell'}_{[b]}(a_0), a - a_0 \rangle + \frac{\nu}{2}(a-a_0)^2, \quad \forall a, a_0 \enspace.
% \end{equation} 
% % 
% By assumption, $\ell_{[b]}(b) = 0$ and $\ell_{[b]}(a) \geq 0$. Thus we have $b = \argmin_a \ell_{[b]}(a)$ which implies ${\ell'}_{[b]}(b)=0$. Then
% $\ell(a, b) = \ell_{[b]}(a) \leq 
% % \ell_{[b]}(b) + \langle {\ell'}_{[b]}(b), a - b \rangle + \frac{\nu}{2}(a-b)^2 =
% \frac{\nu}{2}(a - b)^2$; applied to $a = z_t$ and $b = z_0$, it reads:
% 
$$\Delta G(x_{n+1}, z_t, z_0) \leq \frac{\nu}{2}(z_t - z_0)^2 = \frac{\nu}{2} t^2 \enspace.$$

\begin{proposition} Assuming that the loss function $\ell$ is $\nu$-smooth, the variations of the gap $\Delta G(x_{n+1}, z_t, z_0)$ are smaller than $\epsilon$ for all $t$ in $[-\sqrt{2\epsilon/\nu}, \sqrt{2\epsilon/\nu}]$. Moreover, assuming that $\Gap_{z_0}(\beta(z_0), \theta(z_0)) \leq \epsilon_0 < \epsilon$, we have $(\beta(z_0), \theta(z_0))$ being a primal/dual $\epsilon$-solution for the optimization problem~\eqref{eq:augmented_primal_problem} with augmented data $\Data_n\cup \{(x_{n+1}, z_t)\}$ as long as
\begin{equation*} 
|z_t - z_0| \leq \sqrt{\frac{2}{\nu}(\epsilon - \epsilon_0)} =: s_{\epsilon} \enspace.
\end{equation*}
\end{proposition}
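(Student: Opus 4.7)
My plan is to treat the proposition as an assembly of the identities already derived in the paragraphs preceding the statement. The argument splits into two clean steps: first, reduce the gap variation $\Delta G(x_{n+1}, z_t, z_0)$ to a single loss-at-diagonal term using the specific dual vector from \eqref{eq:def_dual_vector}; second, bound that term by $\nu t^2/2$ via the smoothness descent inequality, and finally fold in the initial slack $\epsilon_0$.

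For the reduction, I would apply \Cref{lm:variation_gap} to the pair $(\beta(z_0), \theta(z_0))$, where $z_0 = x_{n+1}^\top \beta(z_0)$ and $\theta(z_0)$ is the rescaled gradient from \eqref{eq:def_dual_vector}. The standing normalization $\ell(z_0, z_0) = 0$ cancels the primal term $\ell(z_0, x_{n+1}^\top \beta(z_0))$. For the dual term, the $(n+1)$-th coordinate of the numerator in \eqref{eq:def_dual_vector} equals $\partial_2 \ell(z_0, x_{n+1}^\top \beta(z_0)) = \partial_2 \ell(z_0, z_0)$, which vanishes because the diagonal is a global minimizer of $u \mapsto \ell(z_0, u)$ with value zero; hence $\theta_{n+1}(z_0) = 0$, and both $\ell^*(z_0, 0)$ and $\ell^*(z_t, 0)$ vanish by the standing assumption on $\ell^*$. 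What remains is $\Delta G(x_{n+1}, z_t, z_0) = \ell(z_t, x_{n+1}^\top \beta(z_0)) = \ell(z_t, z_0)$.

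For the smoothness step, $u \mapsto \ell(z_0, u)$ is $\nu$-smooth and attains its minimum $0$ with zero gradient at $u = z_0$, so the descent lemma gives $\ell(z_t, z_0) \leq \tfrac{\nu}{2}(z_t - z_0)^2 = \tfrac{\nu}{2} t^2$. Part 1 follows by requiring $\tfrac{\nu}{2} t^2 \leq \epsilon$. For part 2, I add the initial slack through the decomposition
\[
\Gap_{z_t}(\beta(z_0), \theta(z_0)) = \Gap_{z_0}(\beta(z_0), \theta(z_0)) + \Delta G(x_{n+1}, z_t, z_0) \leq \epsilon_0 + \tfrac{\nu}{2} t^2,
\]
which is at most $\epsilon$ precisely when $|t| \leq s_\epsilon$. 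Weak duality $P_{z_t}(\beta(z_0)) - P_{z_t}(\hat\beta(z_t)) \leq \Gap_{z_t}(\beta(z_0), \theta(z_0))$ then upgrades this gap bound to the claimed primal $\epsilon$-suboptimality.

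The one non-mechanical check is that $\theta(z_0)$ actually lies in $\dom D_{z_t}$: the rescaling in \eqref{eq:def_dual_vector} places $X^\top \theta(z_0)$ inside $\dom \Omega^*$, and with $\theta_{n+1}(z_0) = 0$ the new coordinate contributes only $\ell^*(z_t, 0) = 0$, which is finite by the standing normalization. I foresee no real obstacle; the substantive work has already been done in \Cref{lm:variation_gap} and in the construction of \eqref{eq:def_dual_vector}, so the proposition itself reduces to careful bookkeeping of these two ingredients together with the quadratic smoothness upper bound.
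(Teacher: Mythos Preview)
Your proposal is correct and follows essentially the same route as the paper: reduce $\Delta G$ to $\ell(z_t,z_0)$ via \Cref{lm:variation_gap} together with $\theta_{n+1}(z_0)=0$ and the normalizations $\ell(z_0,z_0)=\ell^*(\cdot,0)=0$, then bound this by $\tfrac{\nu}{2}t^2$ using the smoothness descent inequality at the minimizer, and finish by adding the slack $\epsilon_0$. Your extra check that $\theta(z_0)\in\dom D_{z_t}$ is a welcome bit of rigor that the paper leaves implicit; the only cosmetic remark is that when you invoke smoothness of $u\mapsto\ell(z_0,u)$ to bound $\ell(z_t,z_0)$ you are silently using the symmetry assumption $\ell(a,b)=\ell(b,a)$, which is worth stating explicitly.
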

\paragraph{Complexity.}
% With the additive parameterization $z_{t+1} = z_t \pm s_{\epsilon}$
% $y_{n+1}(t_{k+1}) = y_{n+1}(t_k) \pm (t_{k+1} - t_k)$, the grid is given by $t_{k+1} = t_k \pm s_{\epsilon}$, $t_0 = 0$. 
% Thus, 
A given interval $[y_{\min}, y_{\max}]$ can be covered by \Cref{alg:eps_online_homotopy} with $T_{\epsilon}$ steps where
\begin{equation*}
T_{\epsilon} \leq
% \left\lceil \sqrt{\frac{\nu}{2}} \times \frac{y_{\max} - y_{\min}}{\sqrt{\epsilon - \epsilon_0}} \right\rceil = 
% 
\left\lceil \frac{y_{\max} - y_{\min}}{s_{\epsilon}} \right\rceil \in O\left(\frac{1}{\sqrt{\epsilon}} \right) \enspace.
\end{equation*}
% 
% \eugene{Bilateral Path ---> appendix.}
% \paragraph{Bilateral Path.}  As the grid can be constructed by decreasing or increasing value of $z_t$, we can see that the number of solutions calculated along the path can be halved by using only $\beta^{(t_k)}$ as $\epsilon$-solution on the interval $[y_{n+1}(t_k) \pm s_{\epsilon}]$.

We can notice that the step sizes $s_{\epsilon}$ (smooth case) for computing the whole path are independent of the data and the intermediate solutions. Thus, for computational efficiency, the latter can be computed in parallel or by sequentially warm-starting the initialization. Also, since the grid can be constructed by decreasing or increasing the value of $z_t$, one can observe that the number of solutions calculated along the path can be halved by using only $\beta(z_t)$ as an $\epsilon$-solution on the whole interval $[z_t \pm s_{\epsilon}]$.

\paragraph{Lower Bound.}
Using the same reasoning when the loss is $\mu$-strongly convex, we have
\begin{equation*}
\Delta G(x_{n+1}, z_t, z_0) \geq \frac{\mu}{2}(z_t - z_0)^2 \enspace.
\end{equation*}
Hence $\Delta G(x_{n+1}, z_t, z_0) > \epsilon$ as soon as $|z_t - z_0| > \sqrt{\frac{2}{\mu}(\epsilon - \epsilon_0)}$. Thus, in order to guarantee $\epsilon$ approximation errors at any candidate $z_t$, all the step sizes are necessarily of order $\sqrt{\epsilon}$.

\paragraph{Choice of $[y_{\min}, y_{max}]$.} We follow the actual practice in the literature \cite[Remark 5]{Lei19} and set $y_{\min} = y_{(1)}$ and $y_{\max} = y_{(n)}$. In that case, we have $\mathbb{P}(y_{n+1} \in [y_{\min}, y_{\max}]) \geq 1 - 2/(n+1)$. This implies a loss in the coverage guarantee of $2/(n+1)$, which is negligible when $n$ is sufficiently large.
% 
% \paragraph{Error for a Given Grid.} \eug{can be skipped ?}
% \paragraph{Extensions to Other Regularity.}
% The homotopy strategy and complexity analysis that we just have presented strongly depends on the regularity of the loss function $\ell$. For simplicity, we have only detailed the smooth case leading to $O(1/\sqrt{\epsilon})$ grid points. This can be easily extended to Lipschitz, Generalized Self-Concordant \cite{Sun_Tran-Dinh17} or uniformly smooth \cite{Aze_Penot_95} loss by replacing \Cref{eq:smoothness_inequality} with the appropriate regularity inequality \ie controlling the variation $\ell_{[b]}(a) - \ell_{[b]}(a_0)$ (see the appendix).

\paragraph{Related Works on Approximate Homotopy.}
Recent papers \cite{Giesen_Laue_Mueller_Swiercy12, Ndiaye_Le_Fercoq_Salmon_Takeuchi2018} have developed approximation path methods when a function is concavely parameterized. Such techniques cannot be used here since, for any $\beta \in \bbR^p$, the function $z \mapsto P_{z}(\beta)$ is not concave. Thus, it does not fit within their problem description.

Using homotopy continuation to update an exact Lasso solution in the online setting was performed by \cite{Garrigues_Ghaoui09, Lei19}. Allowing an approximate solution allows us to extensively generalize those approaches to a broader class of machine learning tasks, with a variety of regularity assumptions. 
%!TEX root = ../neurips2019.tex

%%%%%%%%%%%%%%%%%%%%%%%%%%%%%%%%%%%%%%%%%%%%%%%%%%%%%%%%%%%%%%%%%%%%%%%%%%%%%%%%%%%%%%%%%%%%%%%%%%%
\section{Practical Computation of a Conformal Prediction Set}
%%%%%%%%%%%%%%%%%%%%%%%%%%%%%%%%%%%%%%%%%%%%%%%%%%%%%%%%%%%%%%%%%%%%%%%%%%%%%%%%%%%%%%%%%%%%%%%%%%%

We present how to compute a conformal prediction set, based on the approximate homotopy algorithm in \Cref{sec:Homotopy_Algorithm}. We show that the set obtained preserves the coverage guarantee, and tends to the exact set when the optimization error $\epsilon$ decreases to zero. In the case of a smooth loss function, we present a variant of conformal sets with an approximate solution, which contains the exact conformal set.

%%%%%%%%%%%%%%%%%%%%%%%%%%%%%%%%%%%%%%%%%%%%%%%%%%%%%%%%%%%%%%%%%%%%%%%%%%%%%%%%%%%%%%%%%%%%%%%%%%%
\subsection{Conformal Sets Directly Based on Approximate Solution}
%%%%%%%%%%%%%%%%%%%%%%%%%%%%%%%%%%%%%%%%%%%%%%%%%%%%%%%%%%%%%%%%%%%%%%%%%%%%%%%%%%%%%%%%%%%%%%%%%%%

\begin{figure}[!t]
  \centering
  \subfigure[Exact conformal prediction set for ridge regression with one hundred regularization parameters ranging from $\lambda_{\max} = \log(p)$ to $\lambda_{\min} = \lambda_{\max}/ 10^4$, spaced evenly on a log scale.]{\includegraphics[width=0.49\columnwidth]{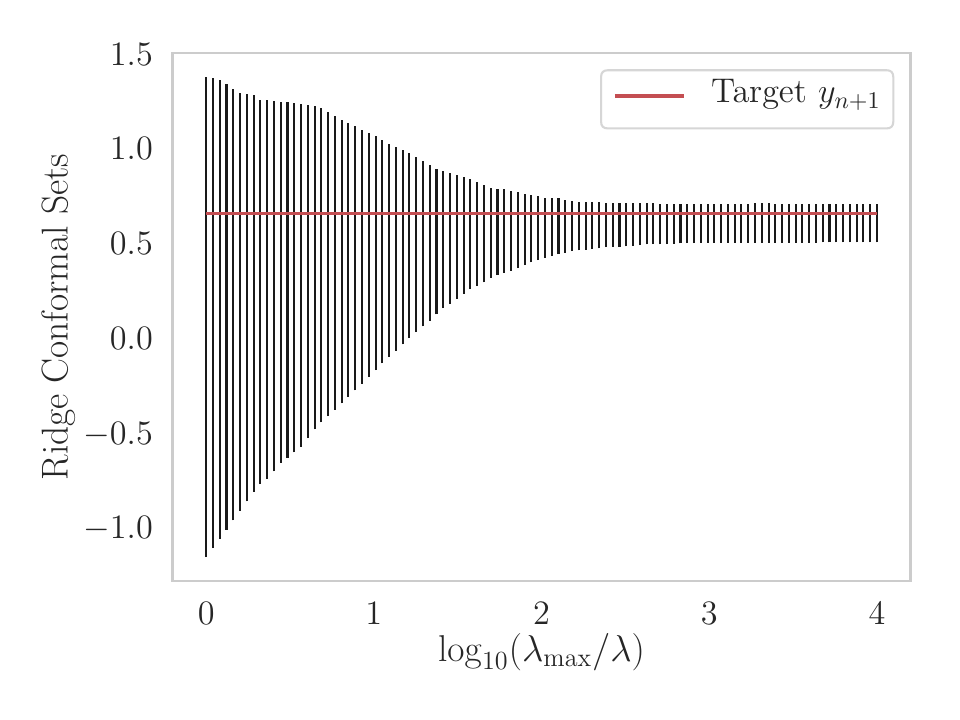}}
  \hspace{0.1cm}
  \subfigure[Evolution of the conformal set of the proposed homotopy method with different optimization errors, spaced evenly on a geometric scale ranging from $\epsilon_{\max}=\norm{(y_1, \cdots, y_n)}^2$ to $\epsilon_{\min} = \epsilon_{\max} / 10^{10}$.]{\includegraphics[width=0.49\columnwidth]{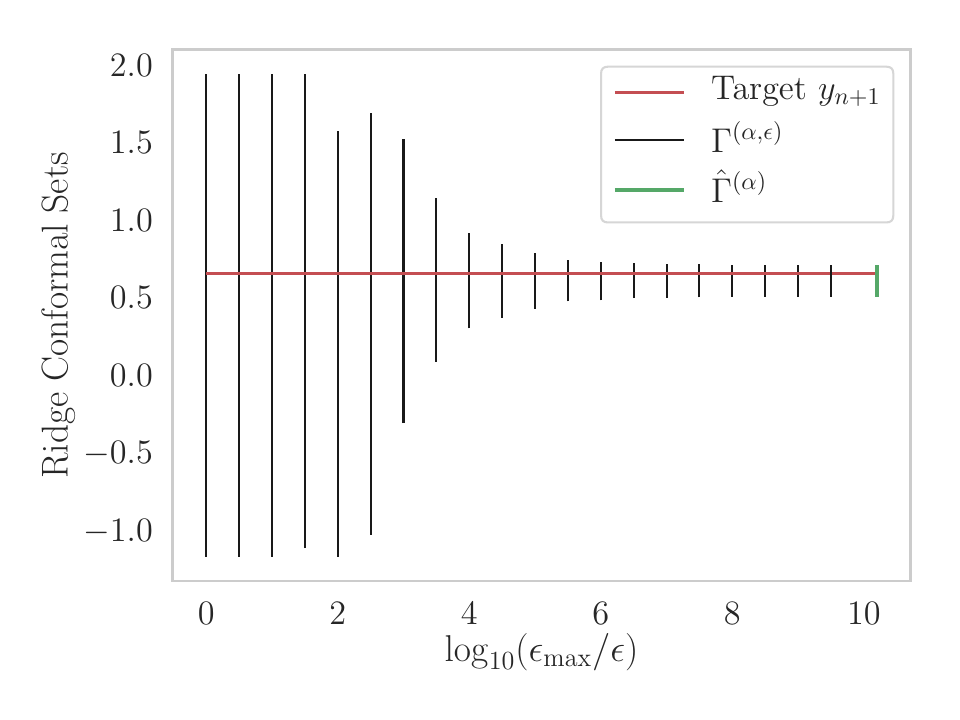}}
  \caption{Illustration of conformal prediction sets at level $\alpha=0.1$ with exact solutions and approximate solutions for ridge regression. We use a synthetic data set generated using \texttt{sklearn} with $X, y =$ \texttt{make\_regression}$(n=100, p=50)$. We have chosen the hyperparameter with the smallest confidence set in Figure (a) to generate Figure (b). \label{fig:ridge_conformal}}
\end{figure}

For a real value $z$, we cannot evaluate $\hat\pi(z)$ in \Cref{eq:general_def_of_pi} in many cases because it depends on the exact solution $\hat \beta(z)$, which is unknown. Instead, we only have access to a given $\epsilon$-solution $\beta(z)$ and the corresponding (approximate) conformity measure given as:
\begin{align}\label{eq:conformity_measure_approx_solution}
&\forall i \in [n],\, R_{i}(z) = \psi(y_i, x_{i}^{\top} \beta(z)) \text{ and } R_{n+1}(z) = \psi(z, x_{n+1}^{\top} \beta(z)) \enspace.
\end{align}
However, for establishing a coverage guarantee, one can note that \emph{any} estimator that preserves exchangeability can be used. Whence, we define
% Defining the approximate residual $R_{y_{n+1}, i} = |y_i - \mu_{y_{n+1}}(x_i)|$ for all $i$ in $[n+1]$ where $\mu_{y_{n+1}}(x_i)$ is the prediction obtained with an $\epsilon$ approximation of an minimizer of $P(\cdot, \mathcal{D}_{n+1})$.
% 
\begin{align}\label{eq:definition_of_pi_epsilon}
&\pi(z, \epsilon) := 
% \frac{1}{n+1}\sum_{i=1}^{n+1} \mathbb{1}_{R_{i}(z) \geq R_{n+1}(z)} =
1 - \frac{1}{n+1} \mathrm{Rank}(R_{n+1}(z)),
&\Gamma^{(\alpha, \epsilon)}(x_{n+1}) := \{z \in \bbR:\, \pi(z, \epsilon) > \alpha \} \enspace.
\end{align}
\begin{proposition}\label{prop:conf_set_approx_solution} Given a significance level $\alpha \in (0, 1)$ and an optimization tolerance $\epsilon > 0$, if the observations $(x_i, y_i)_{i \in [n+1]}$ are exchangeable and identically distributed under probability $\mathbb{P}$, then the conformal set $\Gamma^{(\alpha, \epsilon)}(x_{n+1})$ satisfies the coverage guarantee $\mathbb{P}^{n+1}(y_{n+1} \in \Gamma^{(\alpha, \epsilon)}(x_{n+1})) \geq 1 - \alpha$.
\end{proposition}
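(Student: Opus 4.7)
The plan is to run the classical Vovk--Shafer conformal argument verbatim, but with the exact conformity scores $\hat R_i(y_{n+1})$ replaced by the approximate ones $R_i(y_{n+1})$ built from an $\epsilon$-solution $\beta(y_{n+1})$. The only property of the scores that the argument uses is exchangeability under $\mathbb{P}^{n+1}$, so I just need to verify that this property survives the approximation.

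First I would fix the significance level $\alpha$ and the tolerance $\epsilon$, and consider the scores $R_i(y_{n+1}) = \psi(y_i, x_i^\top \beta(y_{n+1}))$ for $i \in [n]$ together with $R_{n+1}(y_{n+1}) = \psi(y_{n+1}, x_{n+1}^\top \beta(y_{n+1}))$. The augmented dataset $\Data_{n+1}(y_{n+1}) = \Data_n \cup \{(x_{n+1}, y_{n+1})\}$ is i.i.d., hence exchangeable. The objective $P_{y_{n+1}}$ is a sum over the $n+1$ data points and is therefore invariant under any permutation of them; consequently so is its set of $\epsilon$-minimizers, and choosing $\beta(y_{n+1})$ by any permutation-equivariant rule (the one implicitly used throughout the paper) yields an $\epsilon$-solution that is a symmetric functional of the unordered augmented dataset. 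Combined with the symmetry of $\psi$ in its two arguments and the fact that $\beta(y_{n+1})$ enters the $i$-th score only through $x_i^\top \beta(y_{n+1})$, this gives that $(R_1(y_{n+1}), \ldots, R_{n+1}(y_{n+1}))$ is an exchangeable random vector.

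Next I would apply the standard rank sub-uniformity lemma (the same one invoked just before \Cref{eq:exact_conformal_set}, via \cite{Brocker_Kantz11}): for any exchangeable real-valued sequence of length $n+1$, the rank of any one coordinate is sub-uniformly distributed on $\{1,\dots,n+1\}$. Applied to $R_{n+1}(y_{n+1})$, this yields
\begin{equation*}
\mathbb{P}^{n+1}\bigl(\pi(y_{n+1}, \epsilon) \leq \alpha\bigr) = \mathbb{P}^{n+1}\!\left(\mathrm{Rank}(R_{n+1}(y_{n+1})) \geq \lceil(1-\alpha)(n+1)\rceil\right) \leq \alpha.
\end{equation*}
Taking complements and recognizing $\{\pi(y_{n+1},\epsilon) > \alpha\} = \{y_{n+1} \in \Gamma^{(\alpha,\epsilon)}(x_{n+1})\}$ from the definition in \Cref{eq:definition_of_pi_epsilon} delivers the claimed inequality $\mathbb{P}^{n+1}(y_{n+1} \in \Gamma^{(\alpha,\epsilon)}(x_{n+1})) \geq 1 - \alpha$.

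The only delicate point, and what I would single out as the main obstacle, is exchangeability of $(R_i(y_{n+1}))_{i\in[n+1]}$. Unlike the exact case, $\beta(y_{n+1})$ is now the output of an iterative algorithm, so one must argue that the selection among $\epsilon$-optimal points can be taken to depend only on the multiset $\Data_{n+1}(y_{n+1})$, not on its ordering. Once this symmetry requirement on the solver is recorded as an assumption (parallel to the symmetry assumption on the exact estimator in the standard setup), the remainder of the proof is a direct transcription of the classical conformal argument and no property of the tolerance $\epsilon$ itself is needed --- the coverage guarantee holds for every $\epsilon > 0$.
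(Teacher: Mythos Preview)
Your proposal is correct and follows essentially the same route as the paper's own proof: establish exchangeability of $(R_i(y_{n+1}))_{i\in[n+1]}$ from the permutation-invariance of the (separable) objective $P_{y_{n+1}}$ and hence of its $\epsilon$-solution, then invoke the rank sub-uniformity lemma of \cite{Brocker_Kantz11} to conclude. If anything, you are slightly more explicit than the paper in flagging that the \emph{choice} of $\epsilon$-solution by the iterative solver must itself be a symmetric functional of the multiset $\Data_{n+1}(y_{n+1})$---a point the paper treats implicitly.
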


% \begin{proof} put in appendix.
% The separability of the loss function in $P_{n+1}$ implies that $\beta((x_i, y_i)_{i \in [n]}) = \beta((x_{\sigma(i)}, y_{\sigma(i)})_{i \in [n+1]})$ for any permutation $\sigma$. Whence $\mu_{y_{n+1}}(x_{i}) = \mu_{y_{n+1}}(x_{i}^{\top}\beta((x_i, y_i)_{i \in [n+1]})$ is independent of the order of the data. Thus, the exchangeability of $y_1, \cdots, y_{n+1}$ implies that of $R_{y_{n+1}, 1}, \cdots, R_{y_{n+1}, n+1}$. The statistic $(n+1)(1-\pi(y_{n+1}))$, which is equal to the rank of $R_{y_{n+1}, n+1}$ among $R_{y_{n+1}, 1}, \cdots, R_{y_{n+1}, n}$, is uniformly distributed in $\{1, \cdots, n+1\}$. Hence $\mathbb{P}^{n+1}(\pi(t) > \alpha ) \geq 1 - \alpha$.
% \end{proof}

% $t \mapsto R_{z_{t}, n+1} = |z_{t} - x_{n+1}^{\top}\beta^{(t)}|$ (resp. $t \mapsto R_{z_{t},i} = |y_i - x_{i}^{\top}\beta^{(t)}|$ for $i$ in $[n]$)

The conformal prediction set $\Gamma^{(\alpha, \epsilon)}(x_{n+1})$ (with an approximate solution) preserves the $1-\alpha$ coverage guarantee and converges to $\Gamma^{(\alpha, 0)}(x_{n+1}) = \hat \Gamma^{(\alpha)}(x_{n+1})$ (with an exact solution) when the optimization error decreases to zero. It is also easier to compute in the sense that only a finite number of candidates $z$ need to be evaluated. Indeed, as soon as an approximate solution $\beta(z)$ is allowed, we have shown in \Cref{sec:Homotopy_Algorithm} that a solution update is not necessary for neighboring observation candidates.
% Whence the vector $R_{:}(z)$ need to be computed only for a \emph{finite} number $T_{\epsilon}$ of candidate $z$ on a preselected range $[y_{\min}, y_{\max}]$ while maintaining $\Gap_z(\beta(z), \theta(z)) \leq \epsilon$ thanks to our approximate homotopy strategy.
% 
% Since the conformal sets $\hat \Gamma^{(\alpha)}$ and $\Gamma^{(\alpha, \epsilon)}$ do not depend on a specific parametrization of $z$, any diffeomorphism of the real line can be used to generate candidate $z$.

We consider the parameterization in \Cref{eq:parameterization_of_z}. It holds that 
\begin{align*}
% \hat \Gamma^{(\alpha)} &= \{y \in \bbR: \hat \pi(z) > \alpha \} = \{z_{t}: t \in \bbR, \hat \pi(z_{t}) > \alpha \} \enspace,\\
% 
\Gamma^{(\alpha, \epsilon)} &= \{z \in \bbR: \pi(z, \epsilon) > \alpha \} = \{z_t: t \in \bbR, \pi(z_t, \epsilon) > \alpha \} \enspace.
\end{align*}
Using \Cref{alg:eps_online_homotopy}, we can build a set $\{z_{t_1}, \cdots, z_{t_{T_{\epsilon}}} \}$ that covers $[y_{\min}, y_{\max}]$ with $\epsilon$-solutions \ie:
\begin{equation*}
\forall z \in [y_{\min}, y_{\max}], \exists k \in [T_{\epsilon}] \text{ such that }
\Gap_z(\beta(z_{t_k}), \theta(z_{t_k})) \leq \epsilon \enspace.
\end{equation*}
% 
% For simplicity, let us first restrict to the case of quadratic loss where the conformity measure is defined such as
Using the classical conformity measure $\hat R_{i}(z) = |y_i - x_{i}^{\top}\hat \beta(z)|$
% 
% Let $\{\beta(z_{t_k}): k \in [T_{\epsilon}]\}$ be the set of solutions outputted by the online approximation homotopy, the functions $t \mapsto \beta(z_{t})$ $\epsilon$-solution of optimization problem~\eqref{eq:augmented_primal_problem} using the data $\mathcal{D}(z_{t})$ and $t \mapsto x^{\top}\beta(z_{t})$, are piecewise constant on the intervals $(t_k, t_{k+1})$. Also, the map $t \mapsto R_{n+1}(z_{t})$ (resp. $t \mapsto R_{i}(z_{t})$ for $i$ in $[n]$) is piecewise linear (resp. piecewise constant) on $[t_k, t_{k+1}]$. Thus,
% 
and computing a piecewise constant approximation of the solution path $t \mapsto \hat \beta(z_{t})$ with the set $\{\beta(z_{t_k}): k \in [T_{\epsilon}]\}$,
% of $\epsilon$-solution  outputted with the homotopy strategy,
we have
\begin{align*}
\Gamma^{(\alpha, \epsilon)} \cap [y_{\min}, y_{\max}] 
% &= \{z_{t}: t \in \bbR, R_{z_{t}, n+1} \leq Q_{t, 1 - \alpha}\} \cap [y_{\min}, y_{\max}] \\
% 
% &= \bigcup_{k \in [T_{\epsilon}]} \{z_{t}: t \in [t_k, t_{k+1}], R_{z_{t}, n+1} \leq Q_{z_{t}, 1 - \alpha}\}\\
% 
&= \bigcup_{k \in [T_{\epsilon}]} [z_{t_k}, z_{t_{k+1}}] \cap [x_{n+1}^{\top}\beta(z_{t_k}) \pm  Q_{1 - \alpha}(z_{t_k})] \enspace.
\end{align*}
where $Q_{1 - \alpha}(z)$ is the $(1-\alpha)$-quantile of the sequence of approximate residuals $(R_{i}(z))_{i \in [n+1]}$.

Details and extensions to the more general cases of conformity measures are discussed in the appendix.

\begin{table}
\begin{minipage}{0.49\columnwidth}
\centering
\includegraphics[width=\columnwidth]{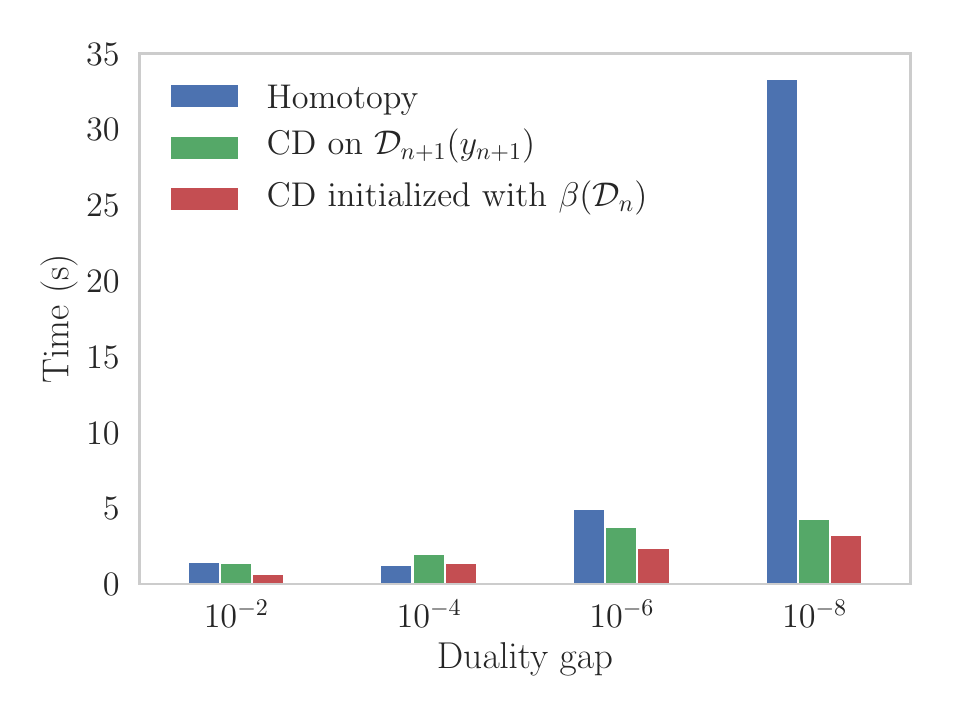}
\end{minipage}
\begin{minipage}{0.49\columnwidth}
\centering
\begin{tabular}{l|lll|}
       & Coverage & Length  & Time (s)  \\
Oracle & 0.9      & 1.685   & 0.59 \\
Split  & 0.9      & 3.111   & 0.26 \\
1e-2   & 0.9      & 1.767   & 2.17  \\
1e-4   & 0.9      & 1.727   & 8.02  \\
1e-6   & 0.9      & 1.724   & 45.94  \\
1e-8   & 0.9      & 1.722   & 312.56
\end{tabular}
\end{minipage}
 \caption{Computing a conformal set for a Lasso regression problem on a climate data set \texttt{NCEP/NCAR Reanalysis} \cite{Kalnay_Kanamitsu_Kistler_Collins_Deaven_Gandin_Iredell_Saha_White_Woollen_Others96} with $n=814$ observations and $p=73570$ features. On the left, we compare the time needed to compute the full approximation path with our homotopy strategy, single coordinate descent (CD) on the full data $\Data_{n+1}(y_{n+1})$, and an update of the solution after initialization with an approximate solution using $\Data_n$. On the right, we display the coverage, length and time of different methods averaged over $100$ randomly held-out validation data sets. \label{tab:lasso_bench}}
\end{table}

\begin{figure}[!t]
  \centering
  \subfigure[Linear regression with $\ell_1$ regularization on Diabetes dataset $(n=442, p=10)$.]{\includegraphics[width=0.49\columnwidth]{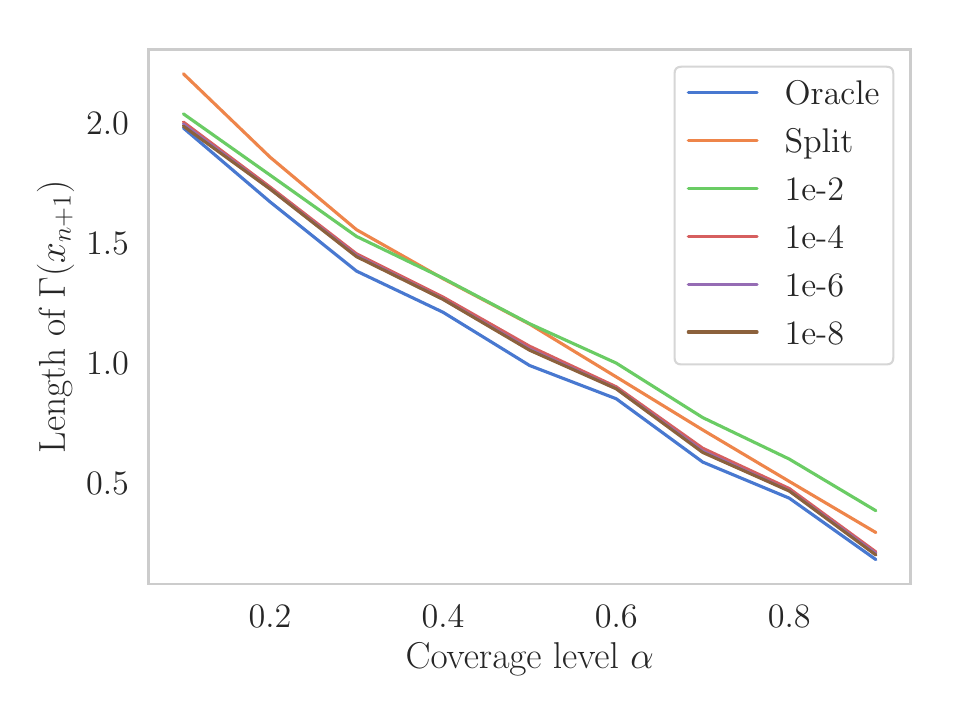}}
  \hspace{0.1cm}
  \subfigure[\texttt{Logcosh} regression with $\ell_{2}^{2}$ regularization on Boston dataset $(n=506, p=13)$.]{\includegraphics[width=0.49\columnwidth]{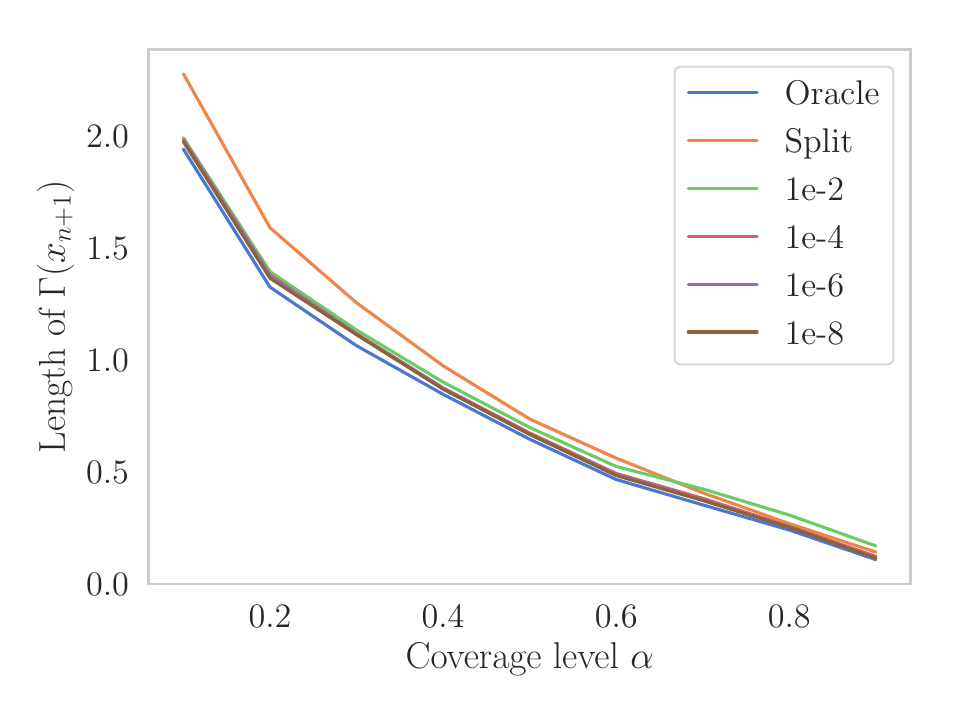}}
  \caption{Length of the conformal prediction sets at different coverage level $\alpha \in \{0.1,0.2, \cdots, 0.9\}$. For all $\alpha$, we display the average over $100$ repetitions of randomly held-out validation data sets. \label{fig:bench_various_coverage}}
\end{figure}

%%%%%%%%%%%%%%%%%%%%%%%%%%%%%%%%%%%%%%%%%%%%%%%%%%%%%%%%%%%%%%%%%%%%%%%%%%%%%%%%%%%%%%%%%%%%%%%%%%%
\subsection{Wrapping the Exact Conformal Set }
%%%%%%%%%%%%%%%%%%%%%%%%%%%%%%%%%%%%%%%%%%%%%%%%%%%%%%%%%%%%%%%%%%%%%%%%%%%%%%%%%%%%%%%%%%%%%%%%%%%

Previously, we showed that a full conformal set can be efficiently computed with an approximate solution, and it converges to the conformal set with an exact solution when the optimization error decreases to zero. When the loss function is smooth and, under a gradient-based conformity measure (introduced below), we provide a stronger guarantee that the exact conformal set can be included in a conformal set, using only approximate solutions.
For this, we show how the conformity measure can be bounded \wrt to the optimization error, when the input observation $z$ changes.
% 
% In the following, $(\hat \beta^{(t)}, \hat \theta^{(t)})$ will denote any primal/dual optimal vector for $P(\cdot, \mathcal{D}_{n+1}^{(t)}), D(\cdot, \mathcal{D}_{n+1}^{(t)})$ where $\mathcal{D}_{n+1}^{(t)} = \mathcal{D}_n \cup (x_{n+1}, z_{t})$.
% 
% Let us consider the augmented data $\mathcal{D}_{n+1}(z) = \mathcal{D}_n \cup (x_{n+1}, y)$ and $\hat \beta(z)$, $\hat \theta(z)$ are respectively minimizer and maximizer of the primal $P_z = P(\cdot, \mathcal{D}_{n+1}(z))$ and dual $D_z = D(\cdot, \mathcal{D}_{n+1}(z))$ objectives. 
% Assuming that we have a \eug{squantilemooth loss and using duality gap bounds}, we have
% 
\paragraph{Gradient based Conformity Measures.}
The separability of the loss function implies that the coordinate-wise absolute value of the gradient of the loss function preserves the excheangeability of the data, and then the coverage guarantee. Whence it can be safely used as a conformity measure \ie
\begin{align}\label{eq:gradient_conformity}
% \footnote{Otherwise, one can also consider a residual defined as $|y_i - \phi_{i}(x_{i}^\top \beta(z))|$ for some function $\phi_i$. In that case, stronger regularity assumptions are needed for the functions $\phi_i$ and $\ell_i$. See appendix ...}
&\hat R_{:}(z) = |\nabla \ell(Y_z, X \hat \beta(z))|,  &R_{:}(z) = |\nabla \ell(Y_z, X \beta(z))| \enspace.
\end{align}
Using \Cref{eq:gradient_conformity}, we show how the function $\hat\pi$ can be approximated from above and below, thanks to a fine bound on the dual optimal solution, which is related to the gradient of the loss function.

\begin{lemma}\label{lm:dual_gap_bound} If the loss function $\ell(z, \cdot)$ is $\nu$-smooth, for any real value $z$, we have
\begin{equation*}
\normin{\theta(z) - \hat \theta(z)}^2 \leq \frac{2 \nu}{\lambda^2} \Gap_z(\beta(z), \theta(z)), \quad \forall (\beta(z), \theta(z)) \in \dom P_z \times \dom D_z \enspace.
\end{equation*}
\end{lemma}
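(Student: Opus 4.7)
My plan is to deduce the bound from strong concavity of the dual objective $D_z$, which in turn comes from smoothness of $\ell$ via conjugate duality. Recall the standard fact that if $\ell(z,\cdot)$ is $\nu$-smooth (and convex), then its Fenchel conjugate $\ell^*(z,\cdot)$ is $(1/\nu)$-strongly convex. Applying the chain rule to the composition $\theta_i \mapsto \ell^*(y_i, -\lambda \theta_i)$, and similarly for the $(n{+}1)$-th coordinate with $\ell^*(z,-\lambda\theta_{n+1})$, each of these univariate terms is $\lambda^2/\nu$-strongly convex in $\theta_i$. Because they are separable across coordinates, their sum is $\lambda^2/\nu$-strongly convex on $\bbR^{n+1}$. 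The remaining dual term $\lambda \Omega^*(X^\top \theta)$ is convex (no smoothness needed, it just does not hurt), so
\[
-D_z(\theta) = \sum_{i=1}^{n} \ell^*(y_i,-\lambda\theta_i) + \ell^*(z,-\lambda\theta_{n+1}) + \lambda\Omega^*(X^\top\theta)
\]
is $\lambda^2/\nu$-strongly convex in $\theta$, i.e.\ $D_z$ is $\lambda^2/\nu$-strongly concave.

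Next, I would invoke the standard strong-concavity inequality at the maximizer $\hat\theta(z)$: for any feasible $\theta(z) \in \dom D_z$,
\[
D_z(\hat\theta(z)) - D_z(\theta(z)) \;\geq\; \frac{\lambda^2}{2\nu}\,\bigl\|\theta(z) - \hat\theta(z)\bigr\|^2 .
\]
This uses the fact that at the optimum the first-order variational inequality $\langle \nabla D_z(\hat\theta(z)), \theta(z) - \hat\theta(z)\rangle \leq 0$ absorbs the linear term in the strong-concavity expansion.

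Finally, I would bound the left-hand side by the duality gap using weak duality: $D_z(\hat\theta(z)) \leq P_z(\hat\beta(z)) \leq P_z(\beta(z))$, so
\[
D_z(\hat\theta(z)) - D_z(\theta(z)) \;\leq\; P_z(\beta(z)) - D_z(\theta(z)) \;=\; \Gap_z(\beta(z),\theta(z)) .
\]
Chaining the two inequalities and multiplying through by $2\nu/\lambda^2$ yields the claim.

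The only subtle step is verifying the strong-concavity modulus: one must be careful to push the factor $\lambda$ from the inner map $\theta \mapsto -\lambda\theta$ through the conjugate, giving the $\lambda^2/\nu$ (not $1/\nu$) modulus, and to check that the $\Omega^*$ term contributes non-negatively so that its presence cannot degrade the estimate. Both are routine once the separable structure of the loss-conjugate terms is highlighted, so I do not expect a real obstacle; the proof is essentially a textbook application of smoothness–strong-convexity duality combined with weak duality.
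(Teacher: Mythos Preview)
Your proposal is correct and follows essentially the same argument as the paper: smoothness of $\ell$ yields $1/\nu$-strong convexity of $\ell^*$, hence $\lambda^2/\nu$-strong concavity of $D_z$; the strong-concavity inequality at $\hat\theta(z)$ combined with the first-order optimality condition and weak duality then gives the bound. Your write-up is in fact slightly more explicit than the paper's in justifying the $\lambda^2$ factor via the chain rule and in noting that the $\Omega^*$ term only helps.
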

Using \Cref{eq:gradient_conformity} and further assuming that the dual vector $\theta(z)$ constructed in \Cref{eq:def_dual_vector} coincides \footnote{This holds whenever $\Omega$ is strongly convex or its domain is bounded. Also, one can guarantee this condition when $\beta(z)$ is build using any converging iterative algorithm, with sufficient iterations, for solving \Cref{eq:augmented_primal_problem}.} with $-\nabla \ell(Y_z, X\beta(z)) / \lambda$ in $\dom D_z$, we have $\hat R_{:}(z) = \normin{\lambda \hat \theta(z)}$ and $R_{:}(z) = \normin{\lambda \theta(z)}$.

Thus, combining the triangle inequality and \Cref{lm:dual_gap_bound} we have
\begin{equation*}
\forall i \in [n+1],\,
(R_{i}(z) - \hat R_{i}(z))^2 \leq \normin{R_{:}(z) - \hat R_{:}(z)}^2 = \lambda^2 \normin{\theta(z) - \hat \theta(z)}^2 \leq 2\nu \epsilon \enspace,
\end{equation*}
where the last inequality holds as soon as we can maintain $\Gap_z(\beta(z), \theta(z))$ to be smaller than $\epsilon$, for any $z$ in $\bbR$. Whence, $\hat R_{i}(z)$ belongs to $[R_{i}(z) \pm \sqrt{2 \nu \epsilon}]$ for any $i$ in $[n+1]$. Noting that 
\begin{equation*}
\hat \pi(z) = 1 - \frac{1}{n+1} \mathrm{Rank}(\hat R_{n+1}(z)) = \frac{1}{n+1}\sum_{i=1}^{n+1} \mathbb{1}_{\hat R_{i}(z) \geq \hat R_{n+1}(z)} \enspace,
\end{equation*}
the function $\hat\pi$ can be easily approximated from above and below by the functions $\underline{\pi}(z, \epsilon)$ and $\overline{\pi}(z, \epsilon)$, which do not depend on the exact solution and are 
defined as:
\begin{align*}
&\underline{\pi}(z, \epsilon) = \frac{1}{n+1}\sum_{i=1}^{n+1} \mathbb{1}_{R_{i}(z) \geq R_{n+1}(z) + 2\sqrt{2 \nu \epsilon}},
&\overline{\pi}(z, \epsilon) = \frac{1}{n+1}\sum_{i=1}^{n+1} \mathbb{1}_{R_{i}(z) \geq R_{n+1}(z) - 2\sqrt{2 \nu \epsilon}} \enspace.
\end{align*}

\begin{proposition}\label{eq:smooth_approximation_of_conf_set}
We assume that the loss function is $\nu$-smooth and that we use a gradient based conformity measure~\eqref{eq:gradient_conformity}. Then, we have $\underline{\pi}(z, \epsilon) \leq \hat \pi(z) \leq \overline{\pi}(z, \epsilon)$ and the approximated lower and upper bounds of the exact conformal set are $\underline{\Gamma}^{(\alpha, \epsilon)} \subset \hat \Gamma^{(\alpha)} \subset \overline{\Gamma}^{(\alpha, \epsilon)}$ where 
\begin{align*}
&\underline{\Gamma}^{(\alpha, \epsilon)} = \{z \in \bbR:\, \underline{\pi}(z, \epsilon) > \alpha \},
&\overline{\Gamma}^{(\alpha, \epsilon)} = \{z \in \bbR:\,  \overline{\pi}(z, \epsilon) > \alpha \} \enspace.
\end{align*}
% 
% \begin{align*}
% &\underline{\Gamma}^{(\alpha, \epsilon)} = \{y \in \bbR:\, \overline{\pi}(z, \epsilon) > \alpha \},
% &\overline{\Gamma}^{(\alpha, \epsilon)} = \{y \in \bbR:\, \underline{\pi}(z, \epsilon) > \alpha \}\enspace.
% \end{align*}
\end{proposition}

% Under assumptions $A1$ and $A2$, we can guarantee upper and lower approximation of the exact conformal set. Interestingly, we show that the vector $R_{y, :}$ need to be computed only for a \emph{finite} number of candidate $z$ in $\bbR$ while maintaining $\Gap_z(\beta(z), \theta(z))$ to be smaller than $\epsilon$.

% Since the conformal set $\hat \Gamma^{(\alpha)}$ does not depends on a specific parametrization of $z$, any diffeomorphism of the real line can be used to generate candidate $z$. We consider the parameterization in \Cref{eq:parameterization_of_z}. It holds 
% $$
% \hat \Gamma^{(\alpha)} = \{y \in \bbR: \hat \pi(z) > \alpha \} = \{z_{t}: t \in \bbR, \hat \pi(z_{t}) > \alpha \} \enspace.
% $$
% Using the homotopy strategies developed in \Cref{sec:Homotopy_Algorithm_with_Online_Observations}, we can build a discrete set $\{y_{n+1}(t_1), \cdots, y_{n+1}(t_{T_{\epsilon}}) \}$ such that for any $z$ on a preselected interval $[y_{\min}, y_{\max}]$, it exists an index $k$ in $[T_{\epsilon}]$ such that $\Gap_z(\beta^{(z_{t_k})}, \theta^{(z_{t_k})})$ is smaller than $\epsilon$. For simplicity, let us first restrict to the case of quadratic loss where the conformity measure is defined such as $\hat R_{i}(z) = |y_i - x_{i}^{\top}\hat \beta(z)|$. The extension to more general case is detailed later.

In the baseline case of quadratic loss, such sets can be easily computed as
\begin{align*}
\overline{\Gamma}^{(\alpha, \epsilon)} \cap [y_{\min}, y_{\max}] &= \bigcup_{k \in [T_{\epsilon}]} [z_{t_k}, z_{t_{k+1}}] \cap [x_{n+1}^{\top}\beta(z_{t_k}) \pm Q_{1 - \alpha}^{-}(t_k)] \enspace,\\
\underline{\Gamma}^{(\alpha, \epsilon)} \cap [y_{\min}, y_{\max}] &= \bigcup_{k \in [T_{\epsilon}]} [z_{t_k}, z_{t_{k+1}}] \cap [x_{n+1}^{\top}\beta(z_{t_k}) \pm Q_{1 - \alpha}^{+}(t_k)]
\enspace,
\end{align*}
where we have denoted $Q_{1 - \alpha}^{-}(t_k)$ (resp. $Q_{1 - \alpha}^{+}(t_k)$) as the $(1-\alpha)$-quantile of the sequence of shifted approximate residuals $(R_{i}(z_{t_{k}}) - 2\sqrt{2 \nu \epsilon})_{i \in [n+1]}$ (resp. $(R_{i}(z_{t_k}) + 2\sqrt{2 \nu \epsilon})_{i \in [n+1]}$) corresponding to the approximate solution $\beta(z_{t_k})$ for $k$ in $[T_{\epsilon}]$.

\section{Numerical Experiments}

\begin{table}[]
\begin{tabular}{lllllll}
                                  & Oracle & Split & 1e-2  & 1e-4  & 1e-6  & 1e-8   \\ \hline
\texttt{Smooth Chebychev Approx.} &        &       &       &       &       &        \\
Coverage                          & 0.92   & 0.95  & 0.92  & 0.92  & 0.92  & 0.92   \\
Length                            & 1.940  & 2.271 & 1.998 & 1.990 & 1.987 & 1.981  \\
Time (s)                              & 0.019  & 0.016 & 0.073 & 0.409 & 3.742 & 36.977 \\ \hline
                                  &        &       &       &       &       &        \\ \hline
\texttt{Linex regression}         &        &       &       &       &       &        \\
Coverage                          & 0.91   & 0.93  & 0.91  & 0.91  & 0.91  & 0.91   \\
Length                            & 2.189  & 2.447 & 2.231 & 2.209 & 2.205 & 2.199  \\
Time (s)                              & 0.013  & 0.012 & 0.050 & 0.234 & 2.054 & 20.712 \\ \hline
\\
\end{tabular}
\caption{Computing a conformal set for a \texttt{logcosh} (resp. \texttt{linex}) regression problem regularized with a Ridge penalty on the Boston (resp. Diabetes) dataset with $n=506$ observations and $p=13$ features (resp. $n=442$ and $p=10$). We display the coverage, length and time of the different methods, averaged over $100$ randomly held-out validation data sets. \label{tab:logcosh_linex_bench}}
\end{table}

We illustrate the approximation of a full conformal prediction set for both linear and non-linear regression problems, using synthetic and real datasets that are publicly available in \texttt{sklearn}. All experiments were conducted with a coverage level of $0.9$ ($\alpha = 0.1$) and a regularization parameter selected by cross-validation on a randomly separated training set (for real data, we used $33\%$ of the data).

In the case of Ridge regression, \emph{exact} and \emph{full} conformal prediction sets can be computed without any assumptions \cite{Nouretdinov_Melluish_Vovk01}. We show in \Cref{fig:ridge_conformal}, the conformal sets \wrt different regularization parameters $\lambda$, and our proposed method based on an approximated solution for different optimization errors. The results indicate that high precision is not necessary to obtain a conformal set close to the exact one.

For other problem formulations, we define an \texttt{Oracle} as the set $[x_{n+1}^{\top}\hat \beta(y_{n+1}) \pm \hat Q_{1 - \alpha}(y_{n+1})]$ obtained from the estimator trained with machine precision on the oracle data $\Data_{n+1}(y_{n+1})$ (the target variable $y_{n+1}$ is not available in practice). For comparison, we display the average over $100$ repetitions of randomly held-out validation data sets, the empirical coverage guarantee, the length, and time needed to compute the conformal set with splitting and with our approach.

We illustrated in \Cref{tab:lasso_bench} the computational cost of our proposed homotopy for Lasso regression, using vanilla coordinate descent (CD) optimization solvers in \texttt{sklearn} \cite{Pedregosa_etal11}. For a large range of duality gap accuracies $\epsilon$, the computational time of our method is roughly the same as a single run of CD on the full data set. However, when $\epsilon$ becomes very small ($\approx 10^{-8}$), we lose computational time efficiency due to large complexity $T_{\epsilon}$. This is visible in regression problems with non-quadratic loss functions \Cref{tab:logcosh_linex_bench}.

The computational times depend only on the data fitting part and the computation of the conformity score functions. Thus, the computational efficiency is independent of the coverage level $\alpha$. We show in \Cref{fig:bench_various_coverage}, the variations of the length of the conformal prediction set for different coverage level.

Overall, the results indicate that the homotopy method provides valid and near-perfect coverage, regardless of the optimization error $\epsilon$. The lengths of the confidence sets generated by homotopy methods gradually increase as $\epsilon$ increases, but all of the sets are consistently smaller than those of splitting approaches. Our experiments showed that high accuracy has only limited benefits.

\subsubsection*{Acknowledgments}
We would like to thank the reviewers for their valuable feedbacks and detailed comments which contributed to improve the quality of this paper. This work was partially supported by MEXT KAKENHI (17H00758, 16H06538), JST CREST (JPMJCR1502), RIKEN Center for Advanced Intelligence Project, and JST support program for starting up innovation-hub on materials research by information integration initiative.

\bibliography{references}
\bibliographystyle{plain}

\newpage
%!TEX root = ../neurips2019.tex

%%%%%%%%%%%%%%%%%%%%%%%%%%%%%%%%%%%%%%%%%%%%%%%%%%%%%%%%%%%%%%%%%%%%%%%%%%%%%%%%%%%%%%%%%%%%%%%%%%%
\section{Appendix}
%%%%%%%%%%%%%%%%%%%%%%%%%%%%%%%%%%%%%%%%%%%%%%%%%%%%%%%%%%%%%%%%%%%%%%%%%%%%%%%%%%%%%%%%%%%%%%%%%%%

\paragraph{More examples of Loss Function.}
Popular instances of loss functions can be found in the literature. For instance, in \texttt{power norm regression}, $\ell(a, b) = |a - b|^q$. When $q=2$, it corresponds to classical linear regression and the cases where $q \in [1, 2)$ are common in robust statistics. In particular $q=1$ is known as least absolute deviation. One can also have the \texttt{log-cosh} loss $\ell(a, b) = \gamma\log(\cosh(a - b)/\gamma)$ as a differentiable alternative for the $\ell_{\infty}$ norm (chebychev approximation).
% On can also find several approximation of the power norm such as \texttt{Huber Loss} \cite{Huber64} $\ell(a, b) = (a - b)^2/2 \mathbb{1}_{[-\gamma, \gamma]} + (\gamma |a - b| - \gamma^2/2) \mathbb{1}_{(\gamma, +\infty]} $ which smoothly balances between the $\ell_1$ and $\ell_2$ deviation or the \texttt{log-cosh} loss $\ell(a, b) = \gamma\log(\cosh(a - b)/\gamma)$ as a differentiable alternative for the $\ell_{\infty}$ norm (chebychev approximation).
One also have the \texttt{Linex} loss function \cite{Gruber10, Chang_Hung07} which provide an asymmetric loss $\ell(a, b) = \exp(\gamma(a - b)) - \gamma(a - b) - 1$, for $\gamma \neq 0$.
% \item Exponential Squared loss \eug{(x. Wang, Y. Jiang, M. Huang, H. Zhang, 2014, not really convex ???)}
% $\ell(a, b) = 1 - \exp(- \frac{(a - b)^2}{\gamma})$.
Least square fitting with non linear transformation where the relation between the observations and the features are described as $y_i \approx \phi(x_i, \beta)$ where $\phi$ is derived from physical or biological prior knowledge on the data. For instances, we have the exponential model $\phi(x_i, \beta) = a \exp(b x_i)$.
% One can also consider Fenchel loss \eug{(Blondel etal) ...}
% , the Michaelis-Mentens model $\phi(x_i, \beta) = a x_i/(b + x_i)$. \eugene{Double check convexity ...}
Any convex regularization functions $\Omega$ can be considered. Popular examples are sparsity inducing norm \cite{Bach_Jenatton_Mairal_Obozinski12}, Ridge \cite{Hoerl_Kennard70}, elastic net \cite{Zou_Hastie05}, total variation, $\ell_{\infty}$, sorted $\ell_1$ norm etc.

%%%%%%%%%%%%%%%%%%%%%%%%%%%%%%%%%%%%%%%%%%%%%%%%%%%%%%%%%%%%%%%%%%%%%%%%%%%%%%%%%%%%%%%%%%%%%%%%%%%
\subsection{Homotopy with Different Regularity}
%%%%%%%%%%%%%%%%%%%%%%%%%%%%%%%%%%%%%%%%%%%%%%%%%%%%%%%%%%%%%%%%%%%%%%%%%%%%%%%%%%%%%%%%%%%%%%%%%%%

We recall that from \Cref{lm:variation_gap}, we have
\begin{equation}\label{eq:gap_variation}
\Delta G(x_{n+1}, z_t, z_0) = [\ell(z, x_{n+1}^{\top}\beta) - \ell(z_0, x_{n+1}^{\top}\beta)] + [\ell^*(z_t, -\lambda \theta_{n+1}) - \ell^*(z_0, -\lambda \theta_{n+1})] \enspace.
\end{equation}
We also recall the assumptions on the loss function.
\paragraph{Assumption A1.} The functions $\ell$ and $\Omega$ are bounded from below. Thus, without loss of generality, we can also assume for any real value $z_0$ that $\ell^{*}(z_0, 0) = - \inf_z \ell(z_0, z) = 0$ otherwise one can always replace $\ell(z_0, \cdot)$ by $\ell(z_0, \cdot) - \inf_z \ell(z_0, z)$.
\paragraph{Assumption A2.} For any real values $z$ and $z_0$, we have $\ell(z_0, z), \ell(z, z_0) \geq 0$ and $\ell(z_0, z_0) = 0$.

This assumptions helps to simplify the first order expansion of $\ell$ at $z_0$ since $z_0 = \argmin_z \ell(z, z_0)$ which is equivalent to $\partial_1 \ell(z_0, z_0)=0$. Similarly, we also have $\partial_2 \ell(z_0, z_0)=0$

Now we apply the formula \Cref{eq:gap_variation} to $z_0 = x_{n+1}^{\top} \beta$ and $z_t = z_0 + t$. Furthermore, using the dual vector in \Cref{eq:def_dual_vector}, we have by construction $\theta_{n+1} \propto \partial_2 \ell(z_0, x_{n+1}^{\top}\beta) = \partial_2 \ell(z_0, z_0) = 0$.
Then the variation of the gap between $z_t$ and $z_0$ simplifies to
\begin{align}\label{eq:simplified_gap_variation}
\Delta G(x_{n+1}, z_t, z_0) = \ell(z_t, z_0) \enspace.
\end{align}

\paragraph{Smooth Loss.}
To simplify the notation, given a real value $b$, we denote $\ell_{[b]}(a) = \ell(a, b)$ which is assumed to be a $\nu$-smooth function \ie 
\begin{equation}\label{eq:smoothness_inequality}
\ell_{[b]}(a) \leq \ell_{[b]}(a_0) + \langle {\ell'}_{[b]}(a_0), a - a_0 \rangle + \frac{\nu}{2}(a-a_0)^2, \quad \forall a, a_0 \enspace.
\end{equation} 
By assumption, $\ell_{[b]}(b) = 0$ and $\ell_{[b]}(a) \geq 0$. Thus we have $b = \argmin_a \ell_{[b]}(a)$ which implies ${\ell'}_{[b]}(b)=0$. Then
$\ell(a, b) = \ell_{[b]}(a) \leq 
% \ell_{[b]}(b) + \langle {\ell'}_{[b]}(b), a - b \rangle + \frac{\nu}{2}(a-b)^2 =
\frac{\nu}{2}(a - b)^2$; applied to $a = z_t$ and $b = z_0$, it reads:
\begin{equation}
\Delta G(x_{n+1}, z_t, z_0) \leq \frac{\nu}{2}(z_{n+1}(t) - z_0)^2 = \frac{\nu}{2} t^2 \enspace.
\end{equation}

\paragraph{Lipschitz Loss.} We suppose that the loss function is $\nu$-Lipschitz \ie 
\begin{equation}\label{eq:lipschitz_inequality}
|\ell_{[b]}(a) - \ell_{[b]}(a_0)| \leq \nu |a - a_0| \enspace.
\end{equation}
Applying \Cref{eq:lipschitz_inequality} to $a = z_{t}$ and $b = a_0 = z_{0}$ reads:
\begin{equation*}
\Delta G(x_{n+1}, z_{t}, z_0) \leq \nu  |z_{t} - z_{0}| = \nu  |t| \enspace.
\end{equation*}
Whence the variation of the gap $\Delta G(x_{n+1}, z_{t}, z_0)$ are smaller than $\epsilon$ as soon as $t \in [-\epsilon / \nu, \epsilon / \nu]$.
In that case, the complexity of the homotopy for covering the interval $[y_{\min}, y_{\max}]$ is
\begin{equation*}
T_{\epsilon} \leq \left\lceil \frac{y_{\max} - y_{\min}}{\epsilon / \nu} \right\rceil \in O\left(\frac{1}{\epsilon} \right) \enspace.
\end{equation*}

\paragraph{$\mathcal{V}$-smooth Loss.} We suppose that the loss function is uniformly smooth \ie 
\begin{equation}\label{eq:unif_smoothness_inequality}
\ell_{[b]}(a) \leq \ell_{[b]}(a_0) + \langle {\ell'}_{[b]}(a_0), a - a_0 \rangle + \mathcal{V}_{a_0}(a - a_0), \quad \forall a, a_0 \enspace,
\end{equation} 
where $\mathcal{V}_{a_0}$ is a non negative functions vanishing at zero.

Applying \Cref{eq:unif_smoothness_inequality} to $a = z_{t}$ and $b = a_0 = z_{0}$ reads:
\begin{equation}
\Delta G(x_{n+1}, z_{t}, z_0) \leq \mathcal{V}_{z_{0}} (z_{t} - z_{0}) = \mathcal{V}_{z_{0}} (t) \enspace.
\end{equation}

The $\mathcal{V}$-smooth regularity contains two important known cases of local and global smoothness with different order:
\begin{itemize}

\item \textbf{Uniformly Smooth Loss} \cite{Aze_Penot_95}. In this case, $\mathcal{V}_{a_0}(a - a_0) = \mathcal{V}(\norm{a - a_0})$ does not depends on $a_0$ and where $\mathcal{V}$ is any non increasing function from $[0, +\infty)$ to $[0, +\infty]$ \eg $\mathcal{V}(t) = \frac{\mu}{d} t^{d}$. When $d=2$, we recover the classical smoothness in \eqref{eq:smoothness_inequality}.

Thus, the variation of the gap $\Delta G(x_{n+1}, z_{t}, z_0)$ are smaller than $\epsilon$ as soon as $t \in [-\mathcal{V}^{-1}(\epsilon), \mathcal{V}^{-1}(\epsilon)]$. This leads to a generalized complexity of the homotopy for covering $[y_{\min}, y_{\max}]$ in $T_{\epsilon}$ steps where
\begin{equation*}
T_{\epsilon} \leq \left\lceil \frac{y_{\max} - y_{\min}}{\mathcal{V}^{-1}(\epsilon)} \right\rceil \in O\left(\frac{1}{\mathcal{V}^{-1}(\epsilon)} \right) \enspace.
\end{equation*}

\item \textbf{Generalized Self-Concordant Loss} \cite{Sun_Tran-Dinh17}. A $\mathcal{C}^3$ convex function $f$ is $(M_{f}, \nu)$-generalized self-concordant of order $\nu \geq 2$ and $M_f\geq 0$ if  $\forall x \in \dom f$ and $\forall u, v \in \bbR^n$:
$$\left| \langle \nabla^3 f(x)[v]u,u \rangle\right| \leq M_f \norm{u}_{x}^{2}\norm{v}_{x}^{\nu-2}\norm{v}_{2}^{3-\nu}.$$

In this case, \cite[Proposition 10]{Sun_Tran-Dinh17} have shown that one could write:
\begin{align*}
\mathcal{V}_{\ell_{[b]}, a_0}(a - a_0) &= w_{\nu}(d_{\nu}(a_0, a))\norm{a - a_0}_{a_0}^{2} \enspace,
\end{align*}
where the last equality holds if $d_{\nu}(a_0, a)<1$ for the case $\nu>2$. Closed-form expressions of $w_{\nu}(\cdot)$ and $d_{\nu}(\cdot)$ are given as follow:
\begin{equation}
d_{\nu}(a_0, a) :=
\begin{cases}
M_{\ell_{[b]}} \norm{a - a_0}_2 &\text{ if } \nu = 2,\\
\left(\frac{\nu}{2} - 1\right) M_{\ell_{[b]}} \norm{a - a_0}_{2}^{3-\nu} \norm{a - a_0}_{a_0}^{\nu-2} &\text{ if } \nu > 2,
\end{cases}
\end{equation}
and 
\begin{equation}\label{eq:def_w_nu}
w_{\nu}(\tau) :=
\begin{cases}
\frac{e^{\tau} - \tau - 1}{\tau^2} &\text{ if } \nu = 2,\\
\frac{-\tau - \log(1-\tau)}{\tau^2} &\text{ if } \nu = 3,\\
\frac{(1-\tau)\log(1-\tau) + \tau}{\tau^2} &\text{ if } \nu = 4,\\
\left(\frac{\nu-2}{4-\nu}\right) \frac{1}{\tau}\left[\frac{\nu-2}{2(3-\nu)\tau}\left( (1-\tau)^{\frac{2(3-\nu)}{2-\nu}} -1 \right) - 1 \right] &\text{ otherwise.}
\end{cases}
\end{equation}
Power loss function $\ell_{[b]}(a, b) = (a - b)^q$ for $q \in (1, 2)$, popular in robust regression, is covered with $M_{\ell_{[b]}} = \frac{2 - q}{\sqrt[(2-q)]{q(q-1)}}, \nu = \frac{2(3-q)}{2 - q} \in (4, +\infty)$.

We refer to \cite{Sun_Tran-Dinh17} for more details and examples.
\end{itemize}

Note that when local smoothness is used, the step sizes depend on the current candidate $z_{t_k}$ along the path: the generated grid is adaptive and the step sizes can be computed numerically.

%%%%%%%%%%%%%%%%%%%%%%%%%%%%%%%%%%%%%%%%%%%%%%%%%%%%%%%%%%%%%%%%%%%%%%%%%%%%%%%%%%%%%%%%%%%%%%%%%%%
\subsection{Proofs}
%%%%%%%%%%%%%%%%%%%%%%%%%%%%%%%%%%%%%%%%%%%%%%%%%%%%%%%%%%%%%%%%%%%%%%%%%%%%%%%%%%%%%%%%%%%%%%%%%%%

\begin{lemma}[c.f. \Cref{lm:variation_gap}] For any $(\beta, \theta) \in \dom P_z \times \dom D_z$ for $z \in\{z_0, y\}$, we have
\begin{equation}
\Delta G(x_{n+1}, z, z_0) = [\ell(z, x_{n+1}^{\top}\beta) - \ell(z_0, x_{n+1}^{\top}\beta)] + [\ell^*(z, -\lambda \theta_{n+1}) - \ell^*(z_0, -\lambda \theta_{n+1})] \enspace.
\end{equation}
\end{lemma}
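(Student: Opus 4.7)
The plan is to prove the lemma by direct computation, exploiting the fact that changing the scalar observation from $z_0$ to $z$ affects only the $(n+1)$-th term in both the primal objective $P_z$ and the dual objective $D_z$, so most terms cancel in the difference of duality gaps.

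First I would write out $\Gap_z(\beta,\theta) = P_z(\beta) - D_z(\theta)$ using the explicit formulas from \Cref{eq:augmented_primal_problem} and \Cref{eq:augmented_dual_problem}. I would then form the difference $\Gap_z(\beta,\theta) - \Gap_{z_0}(\beta,\theta)$. Since $\beta$ and $\theta$ are held fixed across the two values of the parameter, all terms independent of the parameter cancel: the data fidelity sum $\sum_{i=1}^{n} \ell(y_i, x_i^\top \beta)$, the regularizer $\lambda \Omega(\beta)$, the dual sum $\sum_{i=1}^{n} \ell^*(y_i, -\lambda\theta_i)$, and the conjugate penalty $\lambda \Omega^*(X^\top \theta)$ all vanish from the difference.

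What remains is the contribution from the $(n+1)$-th sample: on the primal side, $\ell(z, x_{n+1}^\top \beta) - \ell(z_0, x_{n+1}^\top \beta)$, and on the dual side, $-\ell^*(z, -\lambda \theta_{n+1}) + \ell^*(z_0, -\lambda \theta_{n+1})$. Combining these with the correct signs (the dual part enters with an overall minus in the gap, which flips the sign of the bracketed dual difference) yields the stated identity.

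I expect no substantive obstacle: the result is essentially bookkeeping once one recognizes that $(\beta,\theta)$ is the \emph{same} pair on both sides, so the $z$-dependence enters only through the two singled-out terms. The only thing to be careful about is the sign convention on the conjugate term and verifying that the domain condition $(\beta,\theta) \in \dom P_w \times \dom D_w$ for both $w \in \{z_0, z\}$ is exactly what justifies that all quantities $\ell(z_0, x_{n+1}^\top \beta)$, $\ell(z, x_{n+1}^\top \beta)$, $\ell^*(z_0, -\lambda \theta_{n+1})$, $\ell^*(z, -\lambda \theta_{n+1})$ are finite, so no $\infty - \infty$ issue arises in the subtraction.
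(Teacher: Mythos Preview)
Your proposal is correct and follows essentially the same approach as the paper: write $\Delta G = [P_z(\beta)-P_{z_0}(\beta)] - [D_z(\theta)-D_{z_0}(\theta)]$, observe that all terms not involving the $(n+1)$-th sample cancel, and collect the remaining primal and dual contributions with the appropriate sign. Your additional remark about the domain hypothesis preventing an $\infty-\infty$ ambiguity is a nice clarification that the paper leaves implicit.
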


\begin{proof}
By definition,
\begin{align*}
\Delta G(x_{n+1}, z, z_0) &= \Gap_z(\beta, \theta) - \Gap_{z_0}(\beta, \theta)
= [P_z(\beta) - D_z(\theta)] - [P_{z_0}(\beta) - D_{z_0}(\theta)] \\
&= [P_z(\beta) - P_{z_0}(\beta)] - [D_z(\theta) - D_{z_0}(\theta)] \enspace.
\end{align*}

The conclusion follows from the fact that the first term is
\begin{align*}
P_z(\beta) - P_{z_0}(\beta) & = \ell(z, x_{n+1}^{\top}\beta) - \ell(z_0, x_{n+1}^{\top}\beta) \enspace,
\end{align*}
and the second term is
\begin{align*}
D_z(\theta) - D_{z_0}(\theta) & = \ell^*(z_0, -\lambda \theta_{n+1}) - \ell^*(z, -\lambda \theta_{n+1}) \enspace.
\end{align*}
\end{proof}

For the initialization, we start with a couple of vector $(\beta, \theta) \in \dom P \times \dom D \subset \bbR^p \times \bbR^n$ that we need to extent to $(\beta, \theta^+) \in \dom P_z \times \dom D_z \subset \bbR^p \times \bbR^{n+1}$. For better clarity, we restate the previous lemma to this specific case.

\begin{lemma}
% [c.f. \Cref{lm:variation_gap}]
Let $(\beta, \theta)$ be any primal/dual vector in $\dom P \times \dom D$ and $\theta^+ = (\theta, 0)$ in $\bbR^{n+1}$. For any real value $z$, the variation of the duality gap is equal to the loss between $z$ and $x_{n+1}^{\top} \beta$ \ie
\begin{align*}
\Delta G(x_{n+1}, z) := \Gap_{z}(\beta, \theta^+) - G(\beta, \theta)
= \ell(z, x_{n+1}^{\top} \beta) \enspace.
\end{align*}
\end{lemma}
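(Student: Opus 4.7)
The plan is to simply unfold the definitions of both duality gaps, exploit that $\theta^+ = (\theta, 0)$ zeros out the $(n+1)$-th coordinate, and collect terms. Write
\[
\Gap_z(\beta, \theta^+) - G(\beta, \theta) = \bigl[P_z(\beta) - P(\beta)\bigr] - \bigl[D_z(\theta^+) - D(\theta)\bigr],
\]
and handle the primal and dual differences separately.

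For the primal difference, $P_z$ and $P$ share the first $n$ loss terms and the regularizer $\lambda\Omega(\beta)$, so they differ exactly by the loss contribution from the augmenting point: $P_z(\beta) - P(\beta) = \ell(z, x_{n+1}^\top \beta)$. This is the term we want to see survive.

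For the dual difference, the goal is to show $D_z(\theta^+) = D(\theta)$. Three observations suffice: (i) the first $n$ conjugate terms $\ell^*(y_i, -\lambda \theta_i^+)$ coincide with those of $D(\theta)$ since $\theta_i^+ = \theta_i$ for $i \in [n]$; (ii) the new conjugate term is $\ell^*(z, -\lambda \theta_{n+1}^+) = \ell^*(z,0) = 0$, using Assumption A1; and (iii) $X^\top \theta^+ = X_{[n]}^\top \theta + 0 \cdot x_{n+1} = X_{[n]}^\top \theta$, so $\lambda \Omega^*(X^\top \theta^+) = \lambda \Omega^*(X_{[n]}^\top \theta)$. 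Putting the two sides together yields the claimed identity $\Delta G(x_{n+1}, z) = \ell(z, x_{n+1}^\top \beta)$.

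There is no real obstacle here; the result is essentially an algebraic observation that zero-padding the dual vector preserves every dual term except to introduce $\ell^*(z,0)$, which vanishes by assumption. Alternatively, one could derive the statement as a corollary of Lemma~\ref{lm:variation_gap} by picking $z_0 = x_{n+1}^\top \beta$, so that $\ell(z_0, x_{n+1}^\top \beta) = 0$ and $\ell^*(z_0, 0) = 0$ both drop out; but the direct unfolding above is shorter and makes the role of $\theta_{n+1}^+ = 0$ fully transparent.
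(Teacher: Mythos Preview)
Your proof is correct and follows essentially the same route as the paper: decompose the gap difference into primal and dual parts, identify the primal difference as $\ell(z,x_{n+1}^\top\beta)$, and show the dual difference vanishes using $\ell^*(z,0)=0$ and $X^\top\theta^+=X_{[n]}^\top\theta$. The only cosmetic difference is that the paper first writes the computation for a general last coordinate $\delta$ and then specializes to $\delta=0$, noting explicitly that $0\in\dom\ell^*(z,\cdot)$ follows from the lower-boundedness of $\ell$; your direct unfolding with $\theta^+_{n+1}=0$ is equivalent.
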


\begin{proof}
Let $\delta \in \bbR$ such that $\theta_{\delta}^{+} = (\theta, \delta)^\top \in \bbR^{n+1} \in \dom D_{z}$. We have
\begin{align*}
\Delta G(\delta) &:= \Gap_{z}(\beta, \theta_{\delta}^{+}) - G(\beta, \theta)\\
&= [P_{z}(\beta) - P(\beta)] - [D_{z}(\theta_{\delta}^{+}) - D(\theta)]  \\
&= \ell(z, x_{n+1}^{\top}\beta) + \ell^{*}(z, -\lambda \delta) + \lambda \, [\Omega^*(X_{[n]}^\top\theta + \delta x_{n+1}^{\top}) - \Omega^*(X_{[n]}^\top \theta)] \enspace.
\end{align*}
We choose $\delta=0$; which is admissible because $0 \in \dom \ell^{*}(z, \cdot)$ if and only if $\ell(z, \cdot)$ is bounded from below as assumed. The result follows the observation that $\Delta G(0) = \Delta G(x_{n+1}, z)$.
\end{proof}

\begin{proposition}[c.f. \Cref{prop:conf_set_approx_solution}] Given a significance level $\alpha \in (0, 1)$ and an optimization tolerance $\epsilon > 0$, if the observations $(x_i, y_i)_{i \in [n+1]}$ are exchangeable and identically distributed under probability $\mathbb{P}$, then the conformal set $\Gamma^{(\alpha, \epsilon)}(x_{n+1})$ satisfies the coverage guarantee
$$\mathbb{P}^{n+1}(y_{n+1} \in \Gamma^{(\alpha, \epsilon)}(x_{n+1})) \geq 1 - \alpha \enspace.$$
\end{proposition}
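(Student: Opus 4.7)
The plan is to imitate the standard conformal coverage argument recalled in Section 2.1, with the approximate conformity scores $R_i(z)$ replacing the exact ones $\hat R_i(z)$. The crucial observation is that the coverage proof never uses optimality of $\hat\beta(z)$; it only uses (i) exchangeability of the score sequence evaluated at the true label $y_{n+1}$, and (ii) the sub-uniform distribution of ranks in an exchangeable sequence. So I just need to check that (i) still holds when we replace $\hat\beta(z)$ by any approximate solution $\beta(z)$.

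First, I would make explicit the symmetry assumption on the optimizer: the map $\mathcal{D}_{n+1}(z)\mapsto \beta(z)$ must be invariant under permutations of its $n{+}1$ input pairs. This is the same hypothesis that the standard conformal framework imposes on $\hat\beta(z)$, and it is automatically satisfied by the iterative solvers (coordinate descent, (proximal) gradient, etc.) one uses to produce an $\epsilon$-solution, provided the initialization is itself symmetric in the data (e.g., the zero vector). Combined with the built-in permutation invariance of $\psi$, this guarantees that for every $z$, the mapping $(x_i,y_i)_{i\in[n+1]}\mapsto (R_i(z))_{i\in[n+1]}$ commutes with permutations of the $n{+}1$ indices.

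Second, I would instantiate this at the (random) value $z=y_{n+1}$. Since $(x_i,y_i)_{i\in[n+1]}$ is exchangeable under $\mathbb{P}^{n+1}$ and the score map commutes with permutations, the random vector $(R_i(y_{n+1}))_{i\in[n+1]}$ is itself exchangeable. By the sub-uniform rank lemma (the same \cite{Brocker_Kantz11} statement used for $\hat R$), the rank of $R_{n+1}(y_{n+1})$ inside this exchangeable sequence satisfies
\begin{equation*}
\mathbb{P}^{n+1}\!\left(\mathrm{Rank}(R_{n+1}(y_{n+1})) \leq \lfloor (n+1)\alpha\rfloor\right) \leq \alpha.
\end{equation*}
Unpacking the definition $\pi(z,\epsilon) = 1-\tfrac{1}{n+1}\mathrm{Rank}(R_{n+1}(z))$ in \eqref{eq:definition_of_pi_epsilon}, this is exactly $\mathbb{P}^{n+1}(\pi(y_{n+1},\epsilon)\leq \alpha)\leq \alpha$, hence $\mathbb{P}^{n+1}(y_{n+1}\in \Gamma^{(\alpha,\epsilon)}(x_{n+1})) \geq 1-\alpha$.

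The only genuinely delicate point, and the one I would dwell on, is the symmetry of the optimizer output: a naive implementation that depends on the order of the samples (e.g., randomized coordinate descent with an index-dependent schedule) would break exchangeability, and the conclusion would fail. Once symmetry is assumed, nothing in the argument uses how close $\beta(z)$ is to $\hat\beta(z)$, which is why the tolerance $\epsilon$ plays no role in the coverage bound and why the proof is essentially the same three-line argument as for the exact set, with $\hat R$ replaced by $R$ throughout.
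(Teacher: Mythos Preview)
Your proposal is correct and follows essentially the same argument as the paper: establish permutation invariance of the approximate conformity scores, deduce exchangeability of $(R_i(y_{n+1}))_{i\in[n+1]}$, and apply the sub-uniform rank lemma. Your treatment is in fact slightly more careful than the paper's, which attributes permutation invariance of $\beta(\cdot)$ simply to separability of the loss, whereas you correctly emphasize that it is the \emph{solver} (and its initialization) that must be symmetric for an approximate solution; one minor slip is the direction of your displayed rank inequality, since $\pi(z,\epsilon)\leq\alpha$ corresponds to $\mathrm{Rank}(R_{n+1}(z))\geq (n+1)(1-\alpha)$, not to small ranks.
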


\begin{proof}
The separability of the loss function in $P_{z}$ implies that 
$$\beta((x_{i}, y_{i})_{i \in [n+1]}) = \beta((x_{\sigma(i)}, y_{\sigma(i)})_{i \in [n+1]}) \enspace,$$
for any permutation $\sigma$ of the index set $\{1, \cdots, n+1\}$. Whence the sequence of conformity measure $(R_{i}(y_{n+1}))_{i \in [n+1]}$ is invariant \wrt permutation of the data.

The exchangeability of the sequence $\{(x_{i}, y_{i})_{i \in [n+1]}\}$ implies that of $(R_{i}(y_{n+1}))_{i \in [n+1]}$.

The rests of the proof are based on the fact that the rank of one variable among an \emph{exchangeable and identically distributed} sequence is (sub)-uniformly distributed \cite{Brocker_Kantz11}.
\begin{lemma}\label{lm:distribution_of_rank}
Let $U_1, \ldots, U_{n+1}$ be exchangeable and identically distributed sequence of real valued random variables. Then for any $\alpha \in (0, 1)$, we have $\mathbb{P}^{n+1}(\mathrm{Rank}(U_{n+1}) \leq (n+1)(1 - \alpha)) \geq 1 - \alpha$.
\end{lemma} 

Using \Cref{lm:distribution_of_rank}, we deduce that the rank of $R_{n+1}(y_{n+1})$ among $(R_{i}(y_{n+1}))_{i \in [n+1]}$ is sub-uniformly distributed on the discrete set $\{1, \cdots, n+1\}$. Recalling the definition of \emph{typicalness}
\begin{equation*}
\forall z \in \bbR, \quad
\pi(z, \epsilon) = 1 - \frac{1}{n+1} \mathrm{Rank}(R_{n+1}(z)) \enspace,
\end{equation*}
We have
\begin{align*}
\mathbb{P}^{n+1}(\pi(y_{n+1}, \epsilon) > \alpha) = \mathbb{P}^{n+1}(\mathrm{Rank}(R_{n+1}(y_{n+1}) < (n+1) (1 - \alpha)) \geq 1 - \alpha \enspace.
\end{align*}

The proof for conformal set with exact solution corresponds to $\epsilon=0$.
\end{proof}

\begin{lemma}[c.f. \Cref{lm:dual_gap_bound}]
Assuming that $\ell(y_i, \cdot)$ is $\nu$-smooth, we have
\begin{align}\label{ineq:gap_safe_radius}
\normin{\theta(z) - \hat \theta(z)}^2 \leq \frac{2 \nu}{\lambda^2}  \Gap_{z}(\beta(z), \theta(z)) \enspace.
\end{align}
\end{lemma}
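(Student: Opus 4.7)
The plan is to prove this by chaining two standard pieces of convex analysis: Fenchel duality between smoothness and strong convexity, and the usual quadratic growth consequence of strong convexity for the distance to the optimum.

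First I would observe that since $\ell(y_i,\cdot)$ (and $\ell(z,\cdot)$) is assumed $\nu$-smooth, its Fenchel conjugate $\ell^*(y_i,\cdot)$ is $\tfrac{1}{\nu}$-strongly convex. Consequently, for each $i\in[n+1]$, the one-dimensional map $\theta_i\mapsto \ell^*(y_i,-\lambda\theta_i)$ is $\tfrac{\lambda^2}{\nu}$-strongly convex in $\theta_i$. Summing over the coordinates (a separable sum retains the same modulus), the loss-dependent part of $-D_z$ is $\tfrac{\lambda^2}{\nu}$-strongly convex in $\theta\in\mathbb{R}^{n+1}$. The remaining term $\lambda\Omega^*(X^\top\theta)$ is convex, so $-D_z$ is itself $\tfrac{\lambda^2}{\nu}$-strongly convex on its domain.

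Next, I would exploit strong convexity at the dual optimum $\hat\theta(z)$. Since $\hat\theta(z)$ minimizes $-D_z$, the first-order optimality condition together with strong convexity gives, for every feasible $\theta(z)\in\dom D_z$,
\begin{equation*}
D_z(\hat\theta(z)) - D_z(\theta(z)) \;\geq\; \frac{\lambda^2}{2\nu}\,\bigl\|\theta(z)-\hat\theta(z)\bigr\|^2.
\end{equation*}
Then I would invoke weak duality (strong duality is not even needed) to upper bound the left-hand side by the gap:
\begin{equation*}
D_z(\hat\theta(z)) - D_z(\theta(z)) \;\leq\; P_z(\beta(z)) - D_z(\theta(z)) \;=\; \Gap_z(\beta(z),\theta(z)).
\end{equation*}
Combining the two inequalities and dividing by $\lambda^2/(2\nu)$ yields exactly \eqref{ineq:gap_safe_radius}.

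I do not anticipate a serious obstacle here; the only mildly delicate point is making sure the strong convexity constant of the separable sum $\sum_i \ell^*(y_i,-\lambda\theta_i)$ is indeed $\lambda^2/\nu$ (and not, say, $\lambda^2/(\nu(n+1))$). This is settled by the coordinate-wise inequality above: if each summand satisfies a quadratic lower bound in its own variable with modulus $\lambda^2/\nu$, summing gives the same modulus against the full Euclidean norm $\|\theta-\theta'\|^2 = \sum_i(\theta_i-\theta_i')^2$. After that, the rest of the argument is a one-line concatenation.
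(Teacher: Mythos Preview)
Your proposal is correct and follows essentially the same approach as the paper: deduce $\tfrac{\lambda^2}{\nu}$-strong concavity of $D_z$ from $\nu$-smoothness of the loss via Fenchel duality, apply the resulting quadratic growth inequality at the dual optimum $\hat\theta(z)$, and then bound $D_z(\hat\theta(z))-D_z(\theta(z))$ by the duality gap using weak duality. The only cosmetic difference is that the paper writes the strong concavity inequality with the gradient term and then invokes first-order optimality to discard it, whereas you go straight to the quadratic growth consequence; your explicit check that the separable sum inherits the modulus $\lambda^2/\nu$ is a welcome clarification.
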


Note that such a bound on the dual optimal solution, leveraging duality gap, was used in optimization \cite{Ndiaye_Fercoq_Gramfort_Salmon17, Shibagaki_Karasuyama_Hatano_Takeuchi16} to bound the Lagrange multipliers for identifying sparse components in lasso type problems. 

\begin{proof}
Remember that $\forall i \in [n], \ell(y_i, \cdot)$ is $\nu$-smooth. As a consequence, $\forall i \in [n], \ell^*(y_i, \cdot)$ is
$1/\nu$-strongly convex  \cite[Theorem 4.2.2, p.~83]{Hiriart-Urruty_Lemarechal93b} and so the dual function $D_\lambda$ is $\lambda^2 / \nu$-strongly concave:
\begin{align*}
\forall (\theta_1, \theta_2) \quad D_{z} (\theta_2)
& \leq D_{z} (\theta_1)
+ \langle \nabla D_{z} (\theta_1), \theta_2 - \theta_1 \rangle
- \frac{\lambda^2}{2 \nu} \norm{\theta_1 - \theta_2}^{2} \enspace.
\end{align*}
Specifying the previous inequality for $\theta_1 = \hat \theta(z), \theta_2 = \theta(z)$, one has
\begin{align*}
D_{z} (\theta) & \leq
D_{z} (\hat \theta(z))+ \langle \nabla D_{z} (\hat \theta(z)), \theta(z) - \hat \theta(z) \rangle - \frac{\lambda^2}{2\nu} \normin{\hat \theta(z) - \theta(z)}^{2} \enspace.
\end{align*}
By definition, $ \hat \theta(z)$ maximizes $ D_z$,
so, $\langle \nabla D_{z} (\hat \theta(z)),\theta(z) - \hat \theta(z)  \rangle \leq 0$.
This implies
\begin{align*}
D_{z} (\theta(z)) \leq D_{z} (\hat \theta(z))- \frac{\lambda^2}{2\nu} \normin{\hat \theta(z) - \theta(z)}^{2}.
\end{align*}
By weak duality, we have $ D_{z} (\hat \theta(z)) \leq P_{z} (\beta(z))$, hence
$$D_{z} (\theta(z)) \leq P_{z} (\beta(z)) - \frac{\lambda^2}{2\nu} \normin{\hat \theta(z) - \theta(z)}^{2}$$
 and the conclusion follows.
\end{proof}

\begin{proposition}[c.f. \Cref{eq:smooth_approximation_of_conf_set}]
We assume that the loss function is $\nu$-smooth and that we use a gradient based conformity measure~\eqref{eq:gradient_conformity}. Then, we have $\underline{\pi}(z, \epsilon) \leq \hat \pi(z) \leq \overline{\pi}(z, \epsilon)$ and the approximated lower and upper bounds of the exact conformal set are $\underline{\Gamma}^{(\alpha, \epsilon)} \subset \hat \Gamma^{(\alpha)} \subset \overline{\Gamma}^{(\alpha, \epsilon)}$ where 
\begin{align*}
&\underline{\Gamma}^{(\alpha, \epsilon)} = \{z \in \bbR:\, \underline{\pi}(z, \epsilon) > \alpha \},
&\overline{\Gamma}^{(\alpha, \epsilon)} = \{z \in \bbR:\,  \overline{\pi}(z, \epsilon) > \alpha \} \enspace.
\end{align*}
\end{proposition}

\begin{proof}
We recall that for any $i$ in $[n+1]$, we have $\hat R_{i}(z)$ belongs to $[R_{i}(z) \pm \sqrt{2 \nu \epsilon}]$. Then
\begin{align*}
\hat R_{i}(z) \geq \hat R_{n+1}(z) &\Longrightarrow 
R_{i}(z) + \sqrt{2 \nu \epsilon} \geq
\hat R_{i}(z) \geq \hat R_{n+1}(z) \geq R_{n+1}(z) - \sqrt{2 \nu \epsilon} \\
&\Longrightarrow R_{i}(z) \geq R_{n+1}(z) - 2\sqrt{2 \nu \epsilon} \enspace.
\end{align*}
Whence $\hat \pi(z) \leq \overline{\pi}(z, \epsilon)$. The inequality $\underline{\pi}(z, \epsilon) \leq \hat \pi(z)$ follows from the fact that
$R_{i}(z) - \sqrt{2 \nu \epsilon}$ (resp. $R_{n+1}(z) + \sqrt{2 \nu \epsilon}$) is a lower bound of $\hat R_{i}(z)$ (resp. upper bound of $\hat R_{n+1}(z)$).
\end{proof}

%%%%%%%%%%%%%%%%%%%%%%%%%%%%%%%%%%%%%%%%%%%%%%%%%%%%%%%%%%%%%%%%%%%%%%%%%%%%%%%%%%%%%%%%%%%%%%%%%%%
\subsection{Details on Practical Computations}
%%%%%%%%%%%%%%%%%%%%%%%%%%%%%%%%%%%%%%%%%%%%%%%%%%%%%%%%%%%%%%%%%%%%%%%%%%%%%%%%%%%%%%%%%%%%%%%%%%%

For simplicity, let us first restrict to the case of quadratic loss where the conformity measure is defined such as $\hat R_{i}(z) = |y_i - x_{i}^{\top}\hat \beta(z)|$.

Note that $\pi(z, \epsilon) > \alpha$ if and only if $R_{n+1}(z) \leq Q_{1 - \alpha}(z)$ where $Q_{1 - \alpha}(z)$
% $ := R_{y_{n+1}, (\lceil (n+1) (1 - \alpha) \rceil)}$
is the $(1-\alpha)$-quantile of the sequence of approximate residual $(R_{i}(z))_{i \in [n+1]}$. Then the approximate conformal set can be conveniently written as
\begin{equation*}
\Gamma^{(\alpha, \epsilon)} = \{z \in \bbR: R_{n+1}(z) \leq Q_{1 - \alpha}(z)\} = \bigcup_{z \in \bbR} [x_{n+1}^{\top} \beta(z) \pm Q_{1 - \alpha}(z)] \enspace.
\end{equation*}
Let $\{\beta(z_{t_k}): k \in [T_{\epsilon}]\}$ be the set of solutions outputted by the approximation homotopy method, the functions $t \mapsto \beta(z_{t})$ $\epsilon$-solution of optimization problem~\eqref{eq:augmented_primal_problem} using the data $\Data_{n+1}(z_{t})$ and $t \mapsto x^{\top}\beta(z_{t})$, are piecewise constant on the intervals $(t_k, t_{k+1})$. Also, the map $t \mapsto R_{n+1}(z_{t})$ (resp. $t \mapsto R_{i}(z_{t})$ for $i$ in $[n]$) is piecewise linear (resp. piecewise constant) on $[t_k, t_{k+1}]$. Thus,we have
\begin{align*}
\Gamma^{(\alpha, \epsilon)} \cap [y_{\min}, y_{\max}] 
&= \{z_{t}: t \in \bbR, R_{n+1}(z_{t}) \leq Q_{1 - \alpha}(z_{t}\} \cap [y_{\min}, y_{\max}] \\
&= \bigcup_{k \in [T_{\epsilon}]} \{z_{t}: t \in [t_k, t_{k+1}], R_{n+1}(z_{t}) \leq Q_{1 - \alpha}(z_{t})\}\\
&= \bigcup_{k \in [T_{\epsilon}]} [z_{t_k}, z_{t_{k+1}}] \cap [x_{n+1}^{\top}\beta(z_{t_k}) \pm  Q_{1 - \alpha}(z_{t_k})] \enspace.
\end{align*}
where $Q_{1 - \alpha}(z)$ is the $(1-\alpha)$-quantile of the sequence of approximate residual $(R_{i}(z))_{i \in [n+1]}$.

\paragraph{Extensions to Others Nonconformity Measure.}

We consider a generic conformity measure in \Cref{eq:arbitrary_conformity_measure}.
We basically follow the same step than the derivation of conformal set for ridge in \cite{Vovk_Gammerman_Shafer05}.

For any $i$ in $[n+1]$, we denote the intersection points of the functions $R_{i}(z_{t})$ and $R_{n+1}(z_{t})$ restricted on the interval $[t_k, t_{k+1}]$: $t_{k, i}^{-}$ and $t_{k, i}^{+}$.

We however assume that there is only two intersection points. The computations for more finitely points are the same. For instance, using the absolute value as conformity measure, we have
\begin{align*}
&t_{k, i}^{-} = (\mu_{t_k}(x_{n+1}) - R_{i}(z_{t_k}) - z_{0}),
&t_{k, i}^{+} = (\mu_{t_k}(x_{n+1}) + R_{i}(z_{t_k}) - z_{0}) \enspace,
\end{align*}
where $\mu_{t_k}(x_{n+1}) := x_{n+1}^{\top}\beta(z_{t_k})$. Now, let us define
\begin{align*}
S_i = \{t \in [t_{\min}, t_{\max}]: R_{i}(z_{t}) \geq R_{n+1}(z_{t}) \}
= \bigcup_{k \in [T_{\epsilon}]} S_i \cap [t_k, t_{k+1}] 
= \bigcup_{k \in [T_{\epsilon}]} [t_{k, i}^{-}, t_{k, i}^{+}] \enspace.
\end{align*}
For any $k$ in $[T_{\epsilon}]$, we denote the set of solutions $t_{k, 1}^{-}, t_{k, 1}^{+}, \cdots, t_{k, n+1}^{-}, t_{k, n+1}^{+}$ in increasing order as $t_k = t_{k,0} < t_{k,1} < \cdots < t_{k, l_k} = t_{k+1}$. Whence for any $t \in [t_k, t_{k+1}]$, it exists a unique index $j=\mathcal{J}(t)$ such that $t \in (t_{k, j}, t_{k, j+1})$ or $t \in \{t_{k, j}, t_{k, j+1}\}$ and for any $t \in [t_k, t_{k+1}]$, we have
$$
(n+1)  \pi(z_{t}) = \sum_{i=1}^{n+1} \mathbb{1}_{t \in S_i \cap [t_k, t_{k+1}]}
= N_{k}(\mathcal{J}(t)) + M_{k}(\mathcal{J}(t))
$$
where the functions
$$N_{k}(j) = \sum_{i=1}^{n+1} \mathbb{1}_{ (t_{k, j}, t_{k, j+1}) \subset [t_{k, i}^{-}, t_{k, i}^{+}]} \text{ and } M_{k}(j) = \sum_{i=1}^{n+1} \mathbb{1}_{ t_{k, j} \in [t_{k, i}^{-}, t_{k, i}^{+}]}$$
% $
Note that $\mathcal{J}^{-1}([t_k, t_{k+1}]) = \{0,1, \cdots, l_k\}$ and $\mathcal{J}^{-1}(j) = [t_{k, j}, t_{j+1}]$. Finally, we have
\begin{align}
\Gamma^{(\alpha, \epsilon)} \cap [y_{\min}, y_{\max}] &= \bigcup_{k \in [T_{\epsilon}]} \Gamma^{(\alpha, \epsilon)} \cap [t_k, t_{k+1}] \\
&= \bigcup_{k \in [T_{\epsilon}]} \bigcup_{\underset{N_k(j)> (n+1)\alpha}{j \in [0:l_k]}}  \!\!\! (t_{k, j}, t_{k, j+1}) \;\; \cup \!\!  \bigcup_{\underset{M_k(j)> (n+1)\alpha}{j \in [0:l_k]}}  \{t_{j}\} \enspace.
\end{align}

% $\Gamma^{(\alpha, \epsilon)} = \bigcup_{k \in [T_{\epsilon}]} \Gamma_{k}^{(\alpha, \epsilon)}$
% where 
% \begin{align*}
% \Gamma_{k}^{(\alpha, \epsilon)} &:= \Gamma^{(\alpha, \epsilon)} \cap [t_k, t_{k+1}] = \bigcup_{\underset{N_k(j)> (n+1)\alpha}{j \in [0:l_k]}}  \!\!\! (t_{k, j}, t_{k, j+1}) \;\; \cup \!\!  \bigcup_{\underset{M_k(j)> (n+1)\alpha}{j \in [0:l_k]}}  \{t_{j}\}.
% \end{align*}

% \begin{proof} (appendix) By definition, we have
% \begin{align*}
% \Gamma_{k}^{(\alpha, \epsilon)} &= \{t \in [t_k, t_{k+1}]: \pi(t) > \alpha\}\\
% %
% &= \{t \in [t_k, t_{k+1}]: N_k(\mathcal{J}(t)) > (n+1)\alpha\} \cup \{t \in [t_k, t_{k+1}]: M_k(\mathcal{J}(t)) > (n+1)\alpha\}\\
% % 
% &= \mathcal{J}^{-1} \circ N_{k}^{-1}(](n+1)\alpha, +\infty[) \cup \mathcal{J}^{-1} \circ M_{k}^{-1}(](n+1)\alpha, +\infty[) \enspace.
% % % 
% % &= \bigcup_{\underset{N_k(j)> n\alpha}{j \in [0:l_k]}}  (t_{k, j}, t_{k, j+1}) \cup \bigcup_{\underset{M_k(j)> n\alpha}{j \in [0:l_k]}}  \{t_{j}\}.
% \end{align*}
% Hence the result.
% \end{proof}

%%%%%%%%%%%%%%%%%%%%%%%%%%%%%%%%%%%%%%%%%%%%%%%%%%%%%%%%%%%%%%%%%%%%%%%%%%%%%%%%%%%%%%%%%%%%%%%%%%%
\subsection{Alternative Grid based Strategies.}
%%%%%%%%%%%%%%%%%%%%%%%%%%%%%%%%%%%%%%%%%%%%%%%%%%%%%%%%%%%%%%%%%%%%%%%%%%%%%%%%%%%%%%%%%%%%%%%%%%%

Another line of attempts to find a discretization of the set $\hat \Gamma^{(\alpha)}$ consist in 
roughly approximating the conformal set by restricting $\hat \Gamma^{(\alpha)}(x_{n+1})$ to an arbitrary fine grid of candidate $\hat{\mathcal{Y}}$ \ie $\bigcup_{z \in \hat{\mathcal{Y}}} [x_{n+1}^{\top}\hat \beta(z) \pm \hat Q_{1-\alpha}(z)]$ \cite{Lei_GSell_Rinaldo_Tibshirani_Wasserman18}. Such approximation did not show any coverage guarantee. To overcome this issue, \cite{Chen_Chun_Barber18} have proposed a discretization strategy with a more carefully rounding procedure of the observation vectors. 

Given an arbitrary finite set $\hat{\mathcal{Y}}$ and any discretization function $\hat d : \bbR \mapsto \hat{\mathcal{Y}}$, define
\begin{align}
\Gamma^{\alpha, 1} &= \{z \in \bbR:  \hat d(z) \in [x_{n+1}^{\top}\hat \beta(d(z)) \pm \hat Q_{1-\alpha}(\hat d(z))] \} \enspace,\\
\Gamma^{\alpha, 2} &= \bigcup_{z \in \hat{\mathcal{Y}}} \hat d^{-1} (z) \cap [x_{n+1}^{\top}\hat \beta(z) \pm \hat Q_{1-\alpha}(z)] \enspace.
\end{align}
Then \cite{Chen_Chun_Barber18}[Theorem 2] showed that for any exchangeable finite set $\tilde{\mathcal{Y}} = \tilde{\mathcal{Y}}(\Data_{n+1})$ and discretization function $\tilde d : \bbR \mapsto \tilde{\mathcal{Y}}$, we have the coverage
\begin{equation}\label{eq:chenetal_bound}
\mathbb{P}^{n+1}(y_{n+1} \in \Gamma^{\alpha, i}) \geq 1 - \alpha - \mathbb{P}^{n+1}((\hat{\mathcal{Y}}, \hat d) \neq (\tilde{\mathcal{Y}}, \tilde d)) \, \text{ for } i \in \{1, 2\} \enspace.
\end{equation}
A noticeable weakness of this result is that it strongly depends on the relation between the couples $(\hat{\mathcal{Y}}, \hat d)$ and $(\tilde{\mathcal{Y}}, \tilde d)$. \Cref{eq:chenetal_bound} fails to provide any meaningful informations in many situations \eg the bound is vacuous anytime $|\hat{\mathcal{Y}}| \not= |\tilde{\mathcal{Y}}|$ or two different discretizations are chosen. Thus, the sets $\Gamma^{\alpha, 1}, \Gamma^{\alpha, 2}$ need a careful choice of the finite grid point $\hat{\mathcal{Y}}$ to be practical. This paper shows how to automatically and efficiently calibrate such set without loss in the coverage guarantee. Our approach provide optimization stopping criterion for each grid point while for arbitrary discretization, one must solve problem~\eqref{eq:augmented_primal_problem} at unnecessarily high accuracy. Last but not least, when the loss function is smooth, our approach provide an unprecedented guarantee to contains the full, exact conformal set.

%%%%%%%%%%%%%%%%%%%%%%%%%%%%%%%%%%%%%%%%%%%%%%%%%%%%%%%%%%%%%%%%%%%%%%%%%%%%%%%%%%%%%%%%%%%%%%%%%%%
\subsection{Additional Experiments}
%%%%%%%%%%%%%%%%%%%%%%%%%%%%%%%%%%%%%%%%%%%%%%%%%%%%%%%%%%%%%%%%%%%%%%%%%%%%%%%%%%%%%%%%%%%%%%%%%%%

\subsubsection{Sparse Nonlinear Regression}

We run experiments on the \texttt{Friedman1} regression problem available in \texttt{sklearn} where the inputs $X$ are independent features uniformly distributed on the interval $[0, 1]$. The output $z$ is nonlinearly generated using only $5$ features.
\begin{equation}
y = 10 \sin(\pi  X_{:, 1}  X_{:, 2}) + 20 (X_{:, 3} - 0.5)^2 + 10 X_{:, 4} + 5 X_{:, 5} + 0.5 \mathcal{N}(0, 1) \enspace.
\end{equation}
The results are displayed in \Cref{tab:lasso_friedman1}

% # Noise=0
% \begin{table}[!ht]
% \begin{tabular}{lllllll}
%                                   & Oracle & Split & 1e-2  & 1e-4  & 1e-6  & 1e-8   \\ \hline
% \texttt{Smooth Chebychev Approx.} &        &       &       &       &       &        \\
% Coverage                          & 0.87   & 0.94  & 0.93  & 0.9   & 0.87  & 0.87   \\
% Length                            & 1.533  & 2.419 & 2.347 & 2.092 & 2.037 & 2.019  \\
% Time                              & 0.006  & 0.003 & 0.027 & 0.092 & 0.448 & 3.655  \\ \hline  
% \end{tabular}
% \caption{Computing conformal set for \texttt{lasso} regression problem \texttt{friedman1} dataset with $n=506$ observations and $p=13$ features (resp. $n=500$ and $p=50$). We display the coverage, length and time of different methods averaged over $100$ randomly left out validation data. \label{tab:lasso_friedman1}}
% \end{table}

% # Noise = 0.5
\begin{table}[!ht]
\begin{tabular}{lllllll}
                                  & Oracle & Split & 1e-2  & 1e-4  & 1e-6  & 1e-8   \\ \hline
\texttt{Lasso} 					  &        &       &       &       &       &        \\
Coverage                          & 0.91   & 0.88  & 0.93  & 0.89  & 0.89  & 0.89   \\
Length                            & 1.50   & 2.320 & 2.272 & 2.011 & 1.956 & 1915   \\
Time                              & 0.005  & 0.003 & 0.020 & 0.076 & 0.397 & 3.499  \\ \hline
\\  
\end{tabular}
\caption{Computing conformal set for \texttt{lasso} regression problem \texttt{friedman1} dataset with $n=506$ observations and $p=13$ features (resp. $n=500$ and $p=50$). We display the coverage, length and time of different methods averaged over $100$ randomly left out validation data. \label{tab:lasso_friedman1}}
\end{table}

\subsubsection{Real Data with Large Number of Observations}

In this benchmark, we illustrate the performances of the different conformal prediction strategies when the number of observations is large. We use the California housing dataset available in \texttt{sklearn}. Results are reported in \Cref{tab:logcosh_cali_bench}.

\begin{table}[!ht]
\begin{tabular}{lllllll}
                                  & Oracle & Split & 1e-2  & 1e-4  & 1e-6  & 1e-8   \\ \hline
\texttt{Smooth Chebychev Approx.} &        &       &       &       &       &        \\
Coverage                          & 0.92   & 0.92  & 0.92  & 0.92  & 0.92  & 0.92   \\
Length                            & 0.014  & 0.014 & 0.014 & 0.014 & 0.014 & 0.014  \\
Time                              & 0.096  & 0.065 & 0.203 & 0.269 & 0.942 & 7.578  \\ \hline
\\
\end{tabular}
\caption{Computing conformal set for \texttt{logcosh} regression problem regularized with Ridge penalty on California housing dataset with $n=20640$ observations and $p=8$ features. We display the coverage, length and time of different methods averaged over $100$ randomly left out validation data. \label{tab:logcosh_cali_bench}}
\end{table}

In this example, both the Splitting and the proposed homotopy method achieves the same performances than the \texttt{Oracle} (which use the target $y_{n+1}$ in the model fitting). Due to the large number of observations $n$, the efficiency of the Splitting approach is less affected by its inherent reduction of the sample size. We still note that, a rough approximation of the optimal solution is sufficient to get a good conformal prediction set with homotopy.

\end{document}